\title{Lifted Variable Elimination for Probabilistic Logic Programming}
\author[E. Bellodi, E. Lamma, F. Riguzzi, V. Santos Costa and R. Zese] 
{ELENA BELLODI$^1$, EVELINA LAMMA$^1$,
FABRIZIO RIGUZZI$^2$ \and 
VITOR SANTOS COSTA$^3$, RICCARDO ZESE$^1$ \\
$^1$ Dipartimento di Ingegneria -- Universit\`a di Ferrara\\
Via Saragat 1, 44122, Ferrara, Italy  \\
$^2$ Dipartimento di Matematica e Informatica -- Universit\`a di Ferrara\\
Via Saragat 1, 44122, Ferrara, Italy  \\
$^3$ CRACS and DCC-FCUP -- Universidade do Porto\\
Rua do Campo Alegre, 1021/1055, 4169-007 Porto, Portugal\\
\email{name.surname@unife.it,vsc@dcc.fc.up.pt}}
\newtheorem{theorem}{Theorem}
\newtheorem{example}{Example}
\newtheorem{definition}{Definition}
\begin{document}

\label{firstpage}

\maketitle


\begin{abstract}
Lifted inference has been proposed for various probabilistic logical frameworks in order to compute the 
probability of queries in a time that depends on the size of the domains of the random variables rather than
the number of instances.
Even if various authors have underlined its importance for probabilistic logic programming (PLP), lifted inference
has been applied up to now only to relational languages outside of logic programming. In this paper we adapt Generalized
Counting First Order Variable Elimination (GC-FOVE) to the problem of computing the probability of queries to 
probabilistic logic programs under the distribution semantics. In particular, we extend the Prolog Factor Language
(PFL) to include two new types of factors that are needed for representing ProbLog programs.
These factors take into account the
existing causal independence relationships among random variables and are managed by the extension to variable
elimination proposed by Zhang and Poole for dealing with convergent variables and heterogeneous factors.
Two new operators are added to GC-FOVE for treating heterogeneous factors.
The resulting algorithm, called LP$^2$ for Lifted Probabilistic Logic Programming, has been implemented by modifying the PFL implementation of GC-FOVE and tested on three benchmarks for 
lifted inference. A comparison with PITA and ProbLog2 shows the potential of the approach. 
\end{abstract}

\begin{keywords}
Probabilistic Logic Programming, Lifted Inference, Variable Elimination, Distribution Semantics, ProbLog, Statistical Relational Artificial Intelligence
\end{keywords}


\section{Introduction}

Over the last years, there has been increasing interest in
models that combine first-order logic and probability, both for domain modeling under uncertainty, and for efficiently performing inference and learning~\cite{Getoor+al:book07,DBLP:conf/ilp/2008p}.
Probabilistic Logic Programming (PLP) has recently received an increasing attention for its ability to incorporate
probability in logic programming. Among the various proposals, the one based on the distribution semantics~\cite{DBLP:conf/iclp/Sato95}
has gained popularity as the basis of languages such as Probabilistic Horn Abduction~\cite{DBLP:journals/ai/Poole93}, PRISM~\cite{DBLP:conf/iclp/Sato95}, Independent
Choice Logic~\cite{Poo97-ArtInt-IJ}, Logic Programs with Annotated Disjunctions~\cite{VenVer04-ICLP04-IC}, and ProbLog~\cite{DBLP:conf/ijcai/RaedtKT07}.

Nonetheless, research in Probabilistic Logic Languages has made it very clear that it is crucial to design models that can support efficient inference, while preserving intensional, and declarative modeling. 
Lifted inference~\cite{Poole:2003,DBLP:conf/ijcai/BrazAR05,DBLP:conf/aaai/MilchZKHK08,DBLP:conf/ijcai/BroeckTMDR11} is one of the major advances in this respect. The idea is to take advantage of the regularities in structured models to decrease the number of operations. 
Originally, the idea was proposed as an extension of variable elimination ({\sc ve} for short in the following). Work on lifting {\sc ve} started with \cite{Poole:2003}.
Lifted VE exploits the symmetries present in first-order probabilistic models, so that it can apply the same principles behind {\sc ve} to solve a probabilistic query without grounding the model.

Most work in probabilistic inference compute statistics from a  sum of
products representation, where each element is named a {\em
  factor}. Lifted inference generates templates, named {\em parametric
  factors} or {\em parfactors}, which stand for a set of similar factors found in the inference process, thus delaying as much as possible the use of fully instantiated factors. 

In \cite{DBLP:conf/ilp/GomesC12}, the {\em Prolog Factor Language} (PFL, for short) was proposed as a Prolog extension to support probabilistic reasoning with parfactors. 
PFL exploits the state-of-art algorithm GC-FOVE~\cite{DBLP:journals/jair/TaghipourFDB13}, which redefines the operations described in\cite{DBLP:conf/aaai/MilchZKHK08} to be correct for whatever constraint representation is being used. This decoupling of the lifted inference algorithm from the constraint representation mechanism allows  any constraint language that is closed under these operators to be plugged into the algorithm to obtain an inference system. In fact, the lifted {\sc ve} algorithm of \cite{DBLP:conf/ilp/GomesC12} represents the adaptation of GC-FOVE to the PFL constraints.

In this work, we move further towards exploiting efficient inference via lifted {\sc ve} for PLP Languages under the distribution semantics. To support reasoning compliant with the distribution semantics, we introduce two novel operators (named {\em heterogeneous } lifted multiplication and sum) in the PFL, and modify the GC-FOVE algorithm for computing them. 
We name LP$^2$ (for Lifted Probabilistic Logic Programming) the resulting system. 
An experimental comparison between LP$^2$ and ProbLog2 \cite{problog2} and PITA \cite{RigSwi11-ICLP11-IJ} shows that inference time increases linearly with the number of individuals of the program domain for LP$^2$, rather than exponentially as with ProbLog2 and PITA.

This is an exciting development towards the goal of preserving the declarativeness and conciseness of Probabilistic Logic Languages, while extremely gaining in performances. 

The paper is organized as follows. Section \ref{pre} introduces preliminaries regarding ProbLog,  PFL, Causal Independence Variable Elimination, and GC-FOVE. Section \ref{trans} discusses the translation of ProbLog into the extended PFL. Section \ref{hetop} presents the new operators introduced in GC-FOVE. Section \ref{exp} reports the experiments performed and Section \ref{conc} concludes the paper.

\section{Preliminaries}
\label{pre}
\subsection{ProbLog}
ProbLog \cite{DBLP:conf/ijcai/RaedtKT07} is a Probabilistic Logic Programming (PLP) language. 
A ProbLog program consists of a set of \emph{ground probabilistic facts} plus
a definite logic program, i.e.\ a set of rules. A ground probabilistic fact, written $p::f$, is a ground fact $f$ annotated with a number $p$ such that $0 \leq p \leq 1$. 
An atom that unifies with a ground probabilistic fact is called a \textit{probabilistic atom}, while an atom that unifies with the head of some rule in the logic program is called a \textit{derived atom}. 

If a set of probabilistic facts has the same probability $p$, it can be defined intensionally through the syntax $p::f(X_1,X_2,\ldots,X_n) :- B$, where $f(X_1,X_2,\ldots,X_n)$ is the signature of the set, and $B$ is a conjunction of non-probabilistic goals, as shown in Example \ref{ws_attr_problog}. Such rules are range-restricted: all variables in the head of a rule should also appear in a positive literal in the body.

\begin{example}[Running example]
\label{ws_attr_problog}
Here we present an example inspired by the \textit{workshop attributes} problem of \cite{DBLP:conf/aaai/MilchZKHK08}. The ProbLog program models
the scenario in which a workshop is being organized and a number of people have been invited. \verb|series| indicates 
whether the workshop is successful
enough to start a series of related meetings while
\verb|attends(P)| indicates whether  person \verb|P| will attend the workshop. 
\begin{footnotesize}
\begin{verbatim}
series :- s.
series :- attends(P).
attends(P) :- at(P,A).
0.1::s.
0.3::at(P,A) :- person(P), attribute(A).
\end{verbatim}
\end{footnotesize}
The first two rules define when the workshop becomes a series: either because of its own merits or because people attend. The third rule states that whether a person attends the workshop depends on its attributes (location, date, fame of the organizers, etc).

The probabilistic fact \verb|s| represents the merit of the workshop. The probabilistic fact \verb|at(P,A)| represents whether person \verb|P| attends because of attribute \verb|A|. Notice that the last statement corresponds to a set of ground probabilistic facts, one for each person \verb|P| and attribute \verb|A|.
For brevity we do not show the (non-probabilistic) facts describing \verb|person/1| and \verb|attribute/1| predicates.  \end{example}


A ProbLog program specifies a probability distribution over normal logic programs.  In this work, we consider the semantics in the case of no function symbols to be restricted to finite programs, and assume all worlds have a \emph{two-valued} well-founded model.

For each ground probabilistic fact $p_i::f_i$, an \textit{atomic choice} specifies whether to include $f$ in a world (with probability $p_i$) or not (with probability $1-p_i$). A \textit{total choice C} is a set of atomic choices, one for each ground probabilistic fact. These choices are assumed to be independent, hence 
the probability of a total choice is the product of the probabilities of the individual atomic choices, $P(C) = \Pi_i{p_i}$.  A total choice $C$ also identifies a normal logic program $W=F\cup R$ called a \emph{world}, where $F$ is the set of facts to be included according to $C$ and $R$ denotes the rules in the ProbLog program. Let $\mathcal{W}$ be the set of all possible worlds.  The probability of a world is equal to the probability of its total choice. The conditional probability of a query (a ground atom) $Q$ given a world $W$ $P(Q|W)$ is 1 if the $Q$ is true in the well-founded model of $W$ and $0$ otherwise.  The probability of a query can therefore be obtained as $P(Q)=\sum_{W\in \mathcal{W}}P(Q,W)=\sum_{W\in \mathcal{W}}P(Q|W)P(W)= \sum_{W\in \mathcal{W}:W\models Q}P(W)$.

\subsection{The Prolog Factor Language}

Most graphical models provide a concise representation of a joint
distribution by encoding it as a set of factors. The probability of a
set of variables $\mathbf{X}$ taking the value $\mathbf{x}$ can be
 expressed as product of $n$ factors if:
\[
P(\mathbf{X=x})=\frac{\prod_{i=1,\ldots,n}\phi_i(\mathbf{x}_i)}{Z} \]
where $\mathbf{x}_i$ is a sub-vector of $\mathbf{x}$ that depends on
the $i$-th factor and $Z$ is a normalization constant
(i.e. $Z= \sum_{\mathbf{x}}\prod_{i=1,\ldots,n}\phi_i(\mathbf{x}_i)$). 
Bayesian networks are an example where there is a factor for each
variable that is a function of the variable $X_i$ and its parents
$X_j\ldots X_k$, such that $\phi(X_i,X_j\ldots X_k) = P(X_i|X_j\ldots
X_k)$ and $Z=1$. As progress has been made on managing large
networks, it has become clear that often the same factor appears repeatedly in the network, thus
suggesting the use of templates generalizing individual factors, or
\emph{parametric factors}~\cite{DBLP:conf/uai/KisynskiP09}. 

The Prolog Factor Language (PFL) \cite{DBLP:conf/ilp/GomesC12} extends Prolog to support probabilistic reasoning with parametric factors or \emph{parfactors}.
The PFL syntax for a factor is $Type\ F\ ;\ \phi\ ;\ C$.  $Type$ refers to the type of the network over which the parfactor is defined ($bayes$ for directed
networks or $markov$ for undirected ones); $F$ is a sequence of Prolog terms that define sets
of random variables under the constraints in $C$. The set of all logical variables in $F$ is named $L$.  $C$ is a list of Prolog goals that impose bindings on the logical variables in $L$ (the successful substitutions for the goals in $C$ are the valid values for the variables in $L$). $\phi$  is the table defining the factor in the form of a list of real values.
By default all random variables are boolean but a different domain may be defined.
An example of a factor is
\verb|series,attends(P);[0.51,0.49,0.49,0.51];[person(P)]|:
it has the Boolean random variables \verb|series| and \verb|attends(P)| as arguments, \verb|[0.51,0.49,0.49,0.51]| as table and
\verb|[person(P)]| as constraints. 
The semantics of a PFL program is given by the set of factors obtained by grounding parfactors: each parfactor stands for the
set of its grounding obtained by replacing variables of $L$ with the values allowed by the constraints in $C$. The set of ground factors define a factorization of the joint probability distribution over all random variables.

\begin{example}[PFL Program]
\label{ws_attr_pfl}
A version of the \textit{workshop attributes} problem presented in Example \ref{ws_attr_problog} can be modeled by a PFL program such as
{\footnotesize
\begin{verbatim}
bayes attends(P), at(P,A) ; [0.7, 0.3, 0.3, 0.7] ; [person(P),attribute(A)].
bayes series, attends(P) ; [0.51, 0.49, 0.49, 0.51] ; [person(P)].
\end{verbatim}}
\end{example}

\subsection{Variable Elimination and Causal Independence}
Quite often we want to find out the probability distribution of a set of
random variables $\mathbf{X}$ given that we know the values, or have evidence
$\mathbf{y}$, on a set of variables $\mathbf{Y}$, where $\mathbf{X}$
is often a single variable $X$. Variable Elimination ({\sc
  ve})~\cite{DBLP:journals/jair/ZhangP96} is an algorithm for
computing this \emph{posterior} probability in factorized joint probability
distributions. The key idea is to  eliminate the
random variables from a set of factors one by one until only the query
variable $X$ remains. To do so {\sc ve} eliminates a variable $V$ by first
multiplying all the factors that include $V$ into a single factor; $V$
can then be discarded through summing it out from the newly
constructed factor. 

More formally, suppose $\phi_1(X_1\ldots X_i, Y_1\ldots Y_j)$ and $\phi_2(Y_1\ldots
Y_j, Z_1 \ldots Z_k)$ are factors.  
The product $(\phi_1\times \phi_2)(x_1 \ldots x_i, y_1\ldots y_j,
z_1\ldots z_k)$ is simply $\phi_1(x_1 \ldots x_i, y_1\ldots
y_j,)\times  \phi_2(y_1$ $\ldots y_j, z_1\ldots z_k)$ for every value of
$x_1 \ldots x_i, y_1\ldots y_j, z_1\ldots z_k$.  To eliminate a variable $X_1$ from the factors $\phi(x_1\ldots
x_i)$ one observes that the cases for $X_1$ are mutually exclusive, thus
$\phi'(x_2\ldots x_i$) is simply $(\sum_{x_1}\phi)(x_2\ldots x_i)=
\phi(\alpha_1,x_2\ldots x_i)+ \ldots+\phi(\alpha_m,x_2\ldots x_i)$,
where $\alpha_1,\ldots,\alpha_m$ are the possible values of $X_1$.

The full {\sc ve} algorithm  takes as input a set
of factors $\mathcal{F}$, an elimination order $\rho$, a set of query
variables $\mathbf{X}$ and a list $\mathbf{y}$ of observed
values. First, it sets the observed variables in all factors to their
corresponding observed values. Then it repeatedly selects the first
variable $Z$ from the elimination order $\rho$ and it calls {\sc
  sum-out} on $\mathcal{F}$ and $Z$, until $\rho$ becomes empty. In
the final step, it multiplies together the factors of $\mathcal{F}$
obtaining a new factor $\gamma$ that is normalized as
$\gamma(x)/\sum_{x'}\gamma(x')$ to give the posterior probability.

\paragraph{Noisy OR-Gates}

Bayesian networks take advantage of conditional independence between
variables to reduce the size of the representation. \emph{Causal
  independence} \cite{DBLP:journals/jair/ZhangP96} goes one step
further and looks at independence conditioned on \emph{values} of the
random variables. One important example is the \emph{noisy OR-gate},
where we have a Boolean variable $X$ with parents $\mathbf{Y}$, and
ideally $X$ should be true if any of the $Y_i$ is true.  In practice,
each parent $Y_i$ has a noisy inhibitor that independently blocks or
activates $Y_i$, so $X$ is true if either \textbf{any} of the causes
$Y_i$ holds true \emph{and} is not inhibited.

A noisy OR can be expressed as a factor $\phi$. In fact, it can be
also expressed as a combination of factors by introducing intermediate
variables that represent the effect of each cause \emph{given the
  inhibitor}. For example, if $X$ has two causes $Y_1$ and $Y_2$, we
can introduce a variable $X'$ to account for the effect of $Y_1$ and
$X''$ for $Y_2$, and the factor $\phi(Y_1, Y_2 , X)$ can be expressed
as
\begin{equation}\label{comb}
\phi(y_1,y_2,x)=\sum_{x' \vee x''=x}\psi(y_1,x')\gamma(y_2,x'')
\end{equation}
where the summation is over all values $x'$ and $x''$ of $X'$ and $X''$ whose disjunction is equal to $x$.
The $X$ variable is called \textit{convergent} as it is where
independent contributions from different sources are collected and
combined.  Non-convergent variables will  be called
\textit{regular variables}.

The noisy OR thus allows for a $O(n)$ representation of a conditional
probability table with $n$ parents. Unfortunately,  straightforward use of 
  {\sc ve} for inference would lead to construct $O(2^n)$ tables. A
modified algorithm, called {\sc
  ve1}~\cite{DBLP:journals/jair/ZhangP96}, 
combines factors through a new operator
$\otimes$, that generalizes formula (\ref{comb}) as follows.  Let
$\phi$ and $\psi$ be two factors that share convergent variables
$E_1\ldots E_k$, let $\mathbf{A}$ be the list of regular variables
that appear in both $\phi$ and $\psi$, let $\mathbf{B_1}$ ($\mathbf{B_2}$)
be the list of variables appearing only in $\phi$ ($\psi$). The
combination $\phi \otimes \psi$ is given by
{\footnotesize
\begin{multline}
\phi\otimes \psi (E_1=\alpha_1,\ldots,E_k=\alpha_k,\mathbf{A,B_1,B_2})=\\\sum_{\alpha_{11} \vee \alpha_{12}=\alpha_1}\ldots\sum_{\alpha_{k1} \vee \alpha_{k2}=\alpha_k}
\phi(E_1=\alpha_{11},\ldots,E_k=\alpha_{k1},\mathbf{A,B_1})\psi(E_1=\alpha_{12},\ldots,E_k=\alpha_{k2},\mathbf{A,B_2})
\end{multline}}
Factors containing convergent variables are called \textit{heterogeneous} while 
the remaining factors are called \textit{homogeneous}. Heterogeneous factors sharing convergent variables must
be combined with $\otimes$ that we call \textit{heterogeneous multiplication}.

Algorithm {\sc ve1}  exploits causal independence by keeping two lists of factors instead of one:
a list of homogeneous factors $\mathcal{F}_1$ and a list of heterogeneous factors $\mathcal{F}_2$. Procedure {\sc sum-out} is
replaced by {\sc sum-out1} that takes as input  $\mathcal{F}_1$ and  $\mathcal{F}_2$ and a variable $Z$ to be eliminated.
First, all the factors containing $Z$ are removed from  $\mathcal{F}_1$  and combined with multiplication to obtain 
factor $\phi$. Then all the factors  containing $Z$ are removed from  $\mathcal{F}_2$ and combined with heterogeneous 
multiplication obtaining $\psi$. If there are no such factors set $\psi=nil$. In the latter case, {\sc sum-out1} adds the new (homogeneous) factor
$\sum_z \phi$ to $\mathcal{F}_1$ otherwise it adds the new (heterogeneous) factor $\sum_z \phi\psi$ to $\mathcal{F}_2$.
Procedure {\sc ve1} is the same as {\sc ve} with {\sc sum-out} replaced by {\sc sum-out1} and with the difference that
two sets of factors are maintained instead of one.

The $\otimes$ operator   assumes that the convergent variables are
independent given the regular variables. This can be ensured by
\textit{deputising} the convergent variables: every such variable $E$ is
replaced by  a new 
convergent variable $E'$ (called a \textit{deputy variable}), $E'$
that replaces $E'$ in the heterogeneous factors containing $E$, $E$
becomes a regular variable, and a new factor $\iota(E,E')$ is introduced, called \emph{deputy factor}, that represents the identity function
between $E$ and $E'$, i.e., it is defined by

{\small{\centering\begin{tabular}{|l|r|r|r|r|}
\cline{1-5}
$\iota(E, E')$& \textit{ff}& \textit{ft}& \textit{tf}& \textit{tt}\\\cline{1-5}
& 1.0 & 0.0 & 0.0& 1.0  \\\cline{1-5}
\end{tabular}}}

Deputising ensures that we do not have descendents of a convergent
variable in an heterogeneous factor as long as the elimination order
for {\sc ve1} is such that $\rho(E') < \rho(E)$.


\subsection{GC-FOVE}
Work on lifting {\sc ve}  started
with \cite{Poole:2003}, and the current state of the art is the algorithm GC-FOVE \cite{DBLP:journals/jair/TaghipourFDB13}, which redefines the operations of C-FOVE~\cite{DBLP:conf/aaai/MilchZKHK08}.
The lifted {\sc ve} algorithm of \cite{DBLP:conf/ilp/GomesC12} represents the adaptation of GC-FOVE to the PFL language.

\textit{First-order Variable Elimination} (FOVE) \cite{Poole:2003,DBLP:conf/ijcai/BrazAR05} computes the marginal
probability distribution for query random variables (randvars) by repeatedly applying 
operators that are lifted counterparts of {\sc ve}'s operators.
Models are in the form of a set of parfactors that are essentially the same as in PFL.
A parametrized random variable (PRV) $\mathcal{V}$ is of the form $A|C$,
where $A = F(X_1,\ldots,X_n)$ is a non-ground atom and
$C$ is a constraint on logical variables (logvars) $\textbf{X} = \{X_1,\ldots,X_n\}$. Each PRV represents the set of randvars
$\{F(\mathbf{x})|\textbf{x} \in C\}$, where $\mathbf{x}$ is the tuple of constants $(x_1,\ldots,x_n)$. Given a PRV $\mathcal V$,
we use $RV(\mathcal{V})$ to denote the set of randvars it represents. Each ground
atom is associated with one randvar, which can take
any value in $range(F)$.
 

GC-FOVE tries to eliminate all (non-query) PRVs in a particular order.
To do so, GC-FOVE supports several operators. It first tries
\emph{Lifted Sum-Out}, that excludes a PRV from a parfactor $\phi$ if
the PRV only occurs in $\phi$. Next, \emph{Lifted Multiplication}, that
multiplies two aligned parfactors. Matching variables must be properly
aligned and the new coefficients must be computed taking into account the
number of groundings in $C$. Third, \emph{Lifted Absorption}
eliminates $n$ PRVs that have the same observed value. If the two
operations cannot be applied, a chosen parfactor must be \emph{split}
so that some of its PRVs match another
parfactor. 
In the worst case, when none of the lifted operators can be applied, GC-FOVE
resorts to propositionalization: it completely
grounds the  parametrized 
randvars and parfactors and performs inference
on the ground level. 

GC-FOVE further extends PRVs with counting formulas, introduced in
C-FOVE~\cite{DBLP:conf/aaai/MilchZKHK08}. A counting formula takes
advantage of symmetry existing in factors that are products of
independent variables. It represents a factor of the form
$\phi(F(x_1), F(x_2), \ldots, \\F(x_n))$, where all variables have the
same domain, as $\phi(\#_X[F(X)])$. The factor implements a
multinomial distribution, such that its values depend on the number of
variables $n$ and domain size. The lifted counted variable is named a
PCRV. PCRVs may result from summing-out, when we obtain factors with a
single PRV, or through \emph{Counting Conversion} that searches for
factors of the form $\phi(\prod_i(S(X_j)F(x_j,y_i)))$ and counts on
the occurrences of $Y$. Definitions for counting formulas are reported in \ref{app_definition}.

GC-FOVE employs a constraint-tree to represent arbitrary constraints $C$, whereas the PFL simply uses sets of tuples. 
Arbitrary constraints can capture more symmetries in
the data, which potentially offers the ability to perform
more operations at a lifted level.

\section{Translating  ProbLog into PFL}
\label{trans}

In order to translate ProbLog into PFL, let us start from the
conversion of a ProbLog program into a Bayesian network with noisy OR
nodes. Here we adapt the conversion for Logic Programs with Annotated Disjunctions
presented in \cite{VenVer04-ICLP04-IC,DBLP:journals/fuin/MeertSB08} to
the case of ProbLog.  The first step is to generate the grounding of
the ProbLog program. For each atom $A$ in the Herbrand base of the
program, the Bayesian network contains a Boolean random variable with
the same name.  Each probabilistic fact $p::A$ is represented by a
parentless node with the conditional probability table (CPT):

{\small{\centering\begin{tabular}{|c|c|c|}
\cline{1-3}
A& \textit{f}& \textit{t}\\\cline{1-3}
& 1-p & p  \\\cline{1-3}
\end{tabular}}}

\noindent
For each ground rule $R_i=H \leftarrow B_1,\ldots,B_n,not(C_1),\ldots,not(C_m)$ we add to the network a random variable called
$H_i$ that has as parents $B_1,\ldots,B_n,C_1,\ldots,C_m$ and the following CPT:

{\small{\centering\begin{tabular}{|l|r|r|}
\cline{1-3}
$H_i$&  $B_1=t,\ldots,B_n=t,C_1=\textit{f},\ldots,C_m=\textit{f}$& \mbox{all other columns}\\\cline{1-3}
f& 0.0 & 1.0  \\\cline{1-3}
t& 1.0 & 0.0  \\\cline{1-3}
\end{tabular}}}

In practice $H_i$ is the result of the conjunction of random variables representing the atoms in the body.
Then for each ground atom $H$ in the Herbrand base not appearing in a probabilistic fact, we add $H$ to the network with parents all $H_i$ of ground rules with $H$ in the head and with the CPT

{\small{\centering\begin{tabular}{|l|r|r|}
\cline{1-3}
H& $ \mbox{at least one $H_i=t$}$& \mbox{all other columns}\\\cline{1-3}
f& 0.0 & 1.0  \\\cline{1-3}
t& 1.0 & 0.0  \\\cline{1-3}
\end{tabular}}}
representing the result of the disjunction of random variables $H_i$.

Translating ProbLog into PFL allows us to stay in the lifted
(non-ground) program.  

\begin{example}[Translation of a ProbLog program into PFL]
\label{problog2pfl}
The  translation of the ProbLog program of Example \ref{ws_attr_problog} into PFL is
\begin{footnotesize}
\begin{verbatim}
bayes series1, s; identity ; [].
bayes series2, attends(P); identity; [person(P)].
bayes series, series1, series2 ; disjunction; [].
bayes attends1(P), at(P,A); identity; [person(P),attribute(A)].
bayes attends(P), attends1(P); identity; [person(P)].
bayes s; [0.9, 0.1]; [].
bayes at(P,A); [0.7, 0.3] ; [person(P),attribute(A)].

identity([1,0,0,1]).
disjunction([1,0,0,0,
             0,1,1,1]).
\end{verbatim}
\end{footnotesize}
\end{example}

Notice that \verb|series2| and \verb|attends1(P)| can be seen as or-nodes. Thus, after grounding,
factors derived from the second and the fourth parfactor should not be multiplied together but should be
combined with heterogeneous multiplication, as variables \verb|series2| and \verb|attends1(P)| are in
fact convergent variables. 
To do so, we need to identify heterogeneous factors and add deputy
variables and factors. We thus introduce two new types of factors to
PFL, \verb|het| and \verb|deputy|. The first factor is such that its
ground instantiations are heterogeneous factors. The convergent
variables are assumed to be represented by the first atom in the
factor's list of atoms. Lifting identity is straightforward, it
corresponds to two atoms and imposes an identity factor between their
ground instantiations. Since the factor is fixed, it is not indicated.

\begin{example}[Extended PFL program]
The PFL program of  Example \ref{problog2pfl}, extended with the two new factors \verb|het| and \verb|deputy|, becomes:
\begin{footnotesize}
\begin{verbatim}
het series1p, s; identity ; [].
het series2p, attends(P); identity; [person(P)].
deputy series2, series2p; [].
deputy series1, series1p; [].
bayes series, series1, series2; disjunction ; [].
het attends1p(P), at(P.A); identity; [person(P),attribute(A)].
deputy attends1(P), attends1p(P); [person(P)].
bayes attends(P), attends1(P); identity; [person(P)].
bayes s; [0.9, 0.1]; [].
bayes at(P,A); [0.7, 0.3] ; [person(P),attribute(A)].
\end{verbatim}
\end{footnotesize}
where \verb|series1p|, \verb|series2p| and \verb|attends1p(P)| are the convergent deputy random variables, and 
\verb|series1|, \verb|series2| and \verb|attends1(P)| are their corresponding new regular variables. The fifth Bayesian factor represents the combination of the contribution to \verb|series| of the two rules for it. Causal independence could be applied here as well since the combination is really an OR, but for simplicity we decided to concentrate only on exploiting causal independence for the convergent variables 
represented by the head of rules which is the hard part.
\end{example}


\section{Heterogeneous Lifted Multiplication and Summation}
\label{hetop}
GC-FOVE must be modified in order to take into account heterogeneous factors and convergent variables.
The {\sc ve} algorithm must be replaced by {\sc ve1}, i.e., two lists of factors must be maintained, one with
homogeneous and the other with heterogeneous factors. When summing out a variable, first the homogeneous factors must be
combined together with homogeneous lifted multiplication. Then the heterogeneous factors must be combined together with 
heterogeneous lifted multiplication and, finally, the two results must be combined to produce a final factor from which 
the random variable is eliminated.

Lifted heterogeneous multiplication is defined as
Operator~\ref{het-mul}, considering the case in which the two factors
share convergent random variables. We assume familiarity with set and
relational algebra (e.g.,\ 
join $\bowtie$) while some useful definitions are reported in
\ref{app_definition}.  PRVs must be \emph{count-normalized}, that is,
the corresponding parameters must be scaled to take into account
domain size and number of occurrences in the parfactor. PRVs are then
aligned and the joint domain is computed as the natural join between
the set of constraints. Following standard lifted multiplication, we
assume the same PRV will have a different instance in each grounded
factor. We thus proceed very much as in the grounded case, and for
each case $(a_{11},\ldots,a_{1k},\mathbf{b}_1,\mathbf{b}_2)$ we sum
the potentials obtained by multiplying the
$\phi_1(a_{11},\ldots,a_{1k},\mathbf{b}_1)$ and
$\phi_2(a_{11},\ldots,a_{1k},\mathbf{b}_2)$. Note that although
potentials need not be normalised to sum to $1$ until the end, the
relative counts of $\phi_1$ and $\phi_2$ must be weighed by
considering the number of instances $\phi_2$ for each $\phi_1$.

%
\begin{myoperator}
{\footnotesize\begin{tabbing}
==\==\==\==\=\kill
\textbf{Operator} \sc{het-multiply}\\
\textbf{Inputs}:\\
(1) $g_1=\phi_1(\mathcal{A}_1)|C_1$: a parfactor in model $G$ with convergent variables $\mathcal{A}_1=\{A_{11},\ldots,A_{1k}\}$\\
(2) $g_2=\phi_2(\mathcal{A}_2)|C_2$: a parfactor in model $G$ with convergent variables $\mathcal{A}_2=\{A_{21},\ldots,A_{2k}\}$\\
(3) $\theta = \{\mathbf{X}_1\rightarrow \mathbf{X}_2\}$: an alignment between $g_1$ and $g_2$\\
\textbf{Preconditions}:\\
(1) for $i = 1, 2$: $\mathbf{Y}_i = logvar(\mathcal{A}_i)\setminus \mathbf{X}_i$
is count-normalized w.r.t. $\mathbf{X}_i$ in $C_i$\\
\textbf{Output}: $\phi(\mathcal{A})|C$, such that\\
(1) $C=C_1\theta\bowtie C_2$\\
(2) $\mathcal{A}=\mathcal{A}_1\theta\cup \mathcal{A}_2$\\
(3) Let $\mathcal{A}$ be $(A_1,\ldots,A_k,\mathcal{B})$ with $A_j=A_{1j}\theta=A_{2j}$ for $j=1,\ldots,k$, $\mathcal{B}$ the set of regular variables\\
(4) for each assignment $\mathbf{a}=(a_1,\ldots,a_{k},\mathbf{b})$ to $\mathcal{A}$ with $\mathbf{b}_{1} =\pi_{\mathcal{A}_1\theta} (\mathbf{b})$, $\mathbf{b}_{2} =\pi_{\mathcal{A}_2} (\mathbf{b})$\\
\>$\phi(a_1,\ldots,a_k,\mathbf{b})=$\\
\>\>$\sum_{a_{11} \vee a_{21}=a_1}\ldots\sum_{a_{1k} \vee a_{2k}=a_k}\phi_1(a_{11},\ldots,a_{1k},\mathbf{b}_{1})^{1/r_2}\phi_2(a_{21},\ldots,a_{2k},\mathbf{b}_{2})^{1/r_1}$ \\
\>with $r_i=\mbox{\sc{Count}}_{\mathbf{Y}_i|\mathbf{X}_i}(C_i)$\\
\textbf{Postcondition}: $G \sim  G\setminus\{g_1,g_2\}\cup\{\mbox{\sc{het-multiply}}(g_1,g_2,\theta)\}$
\end{tabbing}}
\caption{Operator {\sc het-multiply}. 
\label{het-mul}}
\end{myoperator}
\begin{example}
Consider the heterogeneous parfactors $g_1=\phi_1(p(X_1))|C_1$ and $g_2=\phi_2(p(X_2), q(X_2,Y_2))|C_2$ and suppose that we want to multiply $g_1$ and $g_2$; $p(X)$ is convergent in $g_1$ and $g_2$; $\{X_1\rightarrow X_2\}$ is an alignment between $g_1$ and $g_2$; 
 $Y_2$ is count-normalized w.r.t.\ $X_2$ in $C_2$;  $r_1=\mbox{\sc{Count}}_{Y_1|X_1}(C_1)=2$ and $\mbox{\sc{Count}}_{Y_2|X_2}(C_2)=3$.
Then $\textsc{het-multiply}(g_1,g_2,$ $\{X_1\rightarrow X_2\})=\phi(p(X_2),q(X_2,Y_2))|C$ with $\phi$ given by:

{\small \centering\begin{tabular}{|l|l|}
\cline{1-2}
&\multicolumn{1}{c|}{$\phi(p(X_2),q(X_2,Y_2))$} \\\cline{1-2}
\textit{ff}& $\phi_1(\textit{f})^{1/3}\phi_2(\textit{f,f})^{1/2}$\\\cline{1-2}
\textit{ft}& $\phi_1(\textit{f})^{1/3}\phi_2(\textit{f,t})^{1/2}$\\\cline{1-2}
\textit{tf}& $\phi_1(\textit{f})^{1/3}\phi_2(\textit{t,f})^{1/2}+\phi_1(\textit{t})^{1/3}\phi_2(\textit{f,f})^{1/2}+\phi_1(\textit{t})^{1/3}\phi_2(\textit{t,f})^{1/2}$\\\cline{1-2}
\textit{tt}& $\phi_1(\textit{f})^{1/3}\phi_2(\textit{t,t})^{1/2}+\phi_1(\textit{t})^{1/3}\phi_2(\textit{f,t})^{1/2}+\phi_1(\textit{t})^{1/3}\phi_2(\textit{t,t})^{1/2}$\\\cline{1-2}
\end{tabular}}
\end{example}
\begin{example}
Consider the heterogeneous parfactors $g_1=\phi_1(p(X_1),q(X_1,Y_1))|C_1$ and $g_2=\phi_2(p(X_2),$ $ q(X_2,Y_2))|C_2$ and suppose we want to multiply $g_1$ and $g_2$;  all randvars are convergent;  $\{X_1\rightarrow X_2,Y_1\rightarrow Y_2\}$ is an alignment between $g_1$ and $g_2$ (so $r_1=r_2=1$).
Then $\textsc{het-multiply}(g_1,g_2,\{X_1\rightarrow X_2,Y_1\rightarrow Y_2\})=\phi(p(X_2),q(X_2,Y_2))|C$, with $\phi$ given by

{\small \centering\begin{tabular}{|l|l|}
\cline{1-2}
&\multicolumn{1}{c|}{$\phi(p(X_2),q(X_2,Y_2))$} \\\cline{1-2}
\textit{ff}& $\phi_1(\textit{f,f})\phi_2(\textit{f,f})$\\\cline{1-2}
\textit{ft}& $\phi_1(\textit{f,f})\phi_2(\textit{f,t})+\phi_1(\textit{f,t})\phi_2(\textit{f,f})+\phi_1(\textit{f,t})\phi_2(\textit{f,t})$\\\cline{1-2}
\textit{tf}& $\phi_1(\textit{t,f})\phi_2(\textit{f,f})+\phi_1(\textit{f,f})\phi_2(\textit{t,f})+\phi_1(\textit{t,f})\phi_2(\textit{t,f})$\\\cline{1-2}
\textit{tt}& $\phi_1(\textit{f,f})\phi_2(\textit{t,t})+\phi_1(\textit{f,t})\phi_2(\textit{t,f})+\phi_1(\textit{f,t})\phi_2(\textit{t,t})+$\\
&$\phi_1(\textit{t,f})\phi_2(\textit{f,t})+\phi_1(\textit{t,t})\phi_2(\textit{f,f})+\phi_1(\textit{t,t})\phi_2(\textit{f,t})+$\\
&$\phi_1(\textit{t,f})\phi_2(\textit{t,t})+\phi_1(\textit{t,t})\phi_2(\textit{t,f})+\phi_1(\textit{t,t})\phi_2(\textit{t,t})$\\\cline{1-2}
\end{tabular}}
\end{example}

The {\sc sum-out} operator must be modified as well. In fact,
consider the case in which a random variable must be summed out from a heterogeneous factor (i.e.
a factor that contains a convergent variable). Consider for example the factor $\phi(p(X),q(X,Y))|C$ with $C=\{x_1,x_2\}\times \{y_1,y_2\}$ and suppose we want to eliminate the PRV $q(X,Y)$. This factor stands for four ground factors of the form $\phi(p(x_i),q(x_i,y_j))$ for $i,j=1,2$
where $p(x_i)$ is convergent. Given an individual $x_i$, the two factors $\phi(p(x_i),q(x_i,y_1))$ and
$\phi(p(x_i),q(x_i,y_2))$ share a convergent variable and cannot be multiplied together with regular multiplication.
In order to sum out $q(X,Y)$ however we must first combine the two factors with heterogeneous multiplication. 
To avoid generating first the ground factors, we have added 
to GC-FOVE  {\sc het-sum-out} (operator \ref{het-sum-out}) that performs the combination and the elimination of a random variable at the same
time. We provide a correctness proof for this operator in \ref{app_proof}.
\begin{myoperator}
{\footnotesize
\begin{tabbing}
==\===\=\kill
\textbf{Operator} \sc{het-sum-out}\\
\textbf{Inputs}:\\
(1) $g=\phi(\mathcal{A})|C$: a parfactor in model $G$\\
(2) let $\mathcal{A}=(A_1,\ldots,A_k,A_{k+1},\mathcal{B})$ where $A_1,\ldots,A_k$ are convergent atoms\\
(3) $A_{k+1}$ is the atom to be summed out \\
\textbf{Preconditions}:\\
(1) For all PRVs $\mathcal{V}$, other than $A_{k+1}|C$, in $G$: $RV(\mathcal{V})\cap RV(A_{k+1}|C)=\emptyset$\\
(2) $A_{k+1}$ contains all the logvars $X\in logvar(\mathcal{A})$ for which $\pi_X(C)$ is not singleton\\
(3) 
$\mathbf{X}^{excl} = logvar(A_{k+1}) \setminus logvar(\mathcal{A} \setminus A_{k+1})$ is count-normalized w.r.t.\\
\>$\mathbf{X}^{com}= logvar(A_{k+1}) \cap logvar(\mathcal{A} \setminus A_{k+1})$ in $C$\\
\textbf{Output}: $\phi'(\mathcal{A}')|C'$, such that\\
(1) $\mathcal{A}'=\mathcal{A}\setminus A_{k+1}$\\
(2) $C'=\pi_\mathbf{X}(C)$\\
(3) for each assignment $(\mathbf{a}',\mathbf{b})=(a'_{1},\ldots,a'_k,\mathbf{b})$, to $\mathcal{A'}$ \\
\>$\phi'(\mathbf{a}',\mathbf{b})=$\\
\>$\left( \sum_{\mathbf{a}\leq \mathbf{a}'}\sum_{a_{k+1}\in range(A_{k+1})}\textsc{Mul}(A_{k+1},a_{k+1})\phi(a_1,\ldots,a_k,a_{k+1},\mathbf{b}) \right)^r-$\\
\>\> $-\sum_{\mathbf{a}<\mathbf{a}'}\phi'(a_1,\ldots,a_k,\mathbf{b})$
with\\
\> $r=\textsc{Count}_{\mathbf{X}^{excl}|\mathbf{X}^{com}}(C)$\\
\textbf{Postcondition}: $\mathcal{P}_{G\setminus\{g\}\cup\{\textsc{het-sum-out}(g,(A_1,\ldots,A_k),A_{k+1})\}}=\sum_{RV(A_{k+1})}\mathcal{P}_\mathcal{G}$
\end{tabbing}}
\caption{Operator {\sc het-sum-out}.\label{het-sum-out} 
The order $\leq$ between
truth values is the obvious one and between tuples of truth values is the product order induced by $\leq$ between values, i.e., $(a_1,\ldots,a_k)\leq(a'_1,\ldots,a'_k)$ iff $a_i\leq a'_i$ for $i=1,\ldots,k$ and $\mathbf{a}<\mathbf{a'}$ iff 
$\mathbf{a}\leq\mathbf{a'}$ and $\mathbf{a}'\not\leq\mathbf{a}$.
Function {\sc Mul} is defined in \ref{app_definition}.}
\end{myoperator}
\begin{example}
Consider the heterogeneous parfactor $g=\phi(r,p(X),q(X,Y))|C$ and suppose that we want to sum out $q(X,Y)$, that $r$ and $p(X)$ are convergent, that $Y$ is count-normalized w.r.t. $X$ and that $\mbox{\sc{Count}}_{Y|X}(C)=2$.
Then $\textsc{het-sum-out}(g,(r,p(X)),q(X,Y))=\phi'(r,p(X))|C'$ with $\phi'$ given by

{\small \centering\begin{tabular}{|l|l|}
\cline{1-2}
&\multicolumn{1}{c|}{$\phi'(r, p(X))$} \\\cline{1-2}
\textit{ff}& $(\phi(\textit{f,f,f})+\phi(\textit{f,f,t}))^2$\\\cline{1-2}
\textit{ft}& $(\phi(\textit{f,t,f})+\phi(\textit{f,t,t})+\phi(\textit{f,f,f})+\phi(\textit{f,f,t}))^2-\phi'(f,f)$\\\cline{1-2}
\textit{tf}& $(\phi(\textit{t,f,f})+\phi(\textit{t,f,t})+\phi(\textit{f,f,f})+\phi(\textit{f,f,t}))^2-\phi'(f,f)$\\\cline{1-2}
\textit{tt}& $(\phi(\textit{t,t,f})+\phi(\textit{t,t,t})+\phi(\textit{t,f,f})+\phi(\textit{t,f,t})+
\phi(\textit{f,t,f})+\phi(\textit{f,t,t})+\phi(\textit{f,f,f})+\phi(\textit{f,f,t}))^2$-\\&$-\phi'(\textit{t,f})-\phi'(\textit{f,t})-\phi'(\textit{f,f})$\\\cline{1-2} 
\end{tabular}}
\end{example}

\section{Experiments}
\label{exp}
In order to evaluate the performance of LP$^2$ algorithm, we compare it with PITA and ProbLog2 in two problems:
\emph{workshops attributes} \cite{DBLP:conf/aaai/MilchZKHK08} and Example 7 in \cite{DBLP:conf/ilp/Poole08} that we call \emph{plates}. 
Code of all the problems can be found in \ref{app_programs}.
Moreover, we did a scalability test on a third problem: \emph{competing workshops} \cite{DBLP:conf/aaai/MilchZKHK08}.
All the tests were done on a machine with an Intel Dual Core E6550 2.33GHz processor and 4GB of main memory.
The \emph{workshops attributes}  problem differs from Example \ref{ws_attr_problog} because the first clause for 
\verb|series| is missing and the second clause contains a probabilistic atom in its body.
The \emph{competing workshops} problem differs from \emph{workshops attributes} because it considers, instead of workshop attributes, a set of competing workshops $W$ each one associated with a binary random variable
\emph{hot(W)}, which indicates whether it is focusing on popular research areas. 
The \emph{plates} problem is an artificial example  which contains two sets of individuals, $X$ and $Y$. 
The distribution is defined by 7 probabilistic facts and 9 rules.

Figure \ref{watt-compare} shows the runtime of LP$^2$, PITA and ProbLog2 on the \emph{workshops attributes} problem for the query \texttt{series} where we fixed the number of people to 50 and we increased the number of attributes $m$.
As expected, LP$^2$ is able to solve a much larger set of problems than PITA and ProbLog2.
Figure \ref{watt-ns} shows the time spent by LP$^2$ with up to $10^5$  attributes. 
\begin{figure}
\begin{center}
\subfigure[\label{watt-compare}\textit{Comparison.}]{\includegraphics[scale=0.425]{./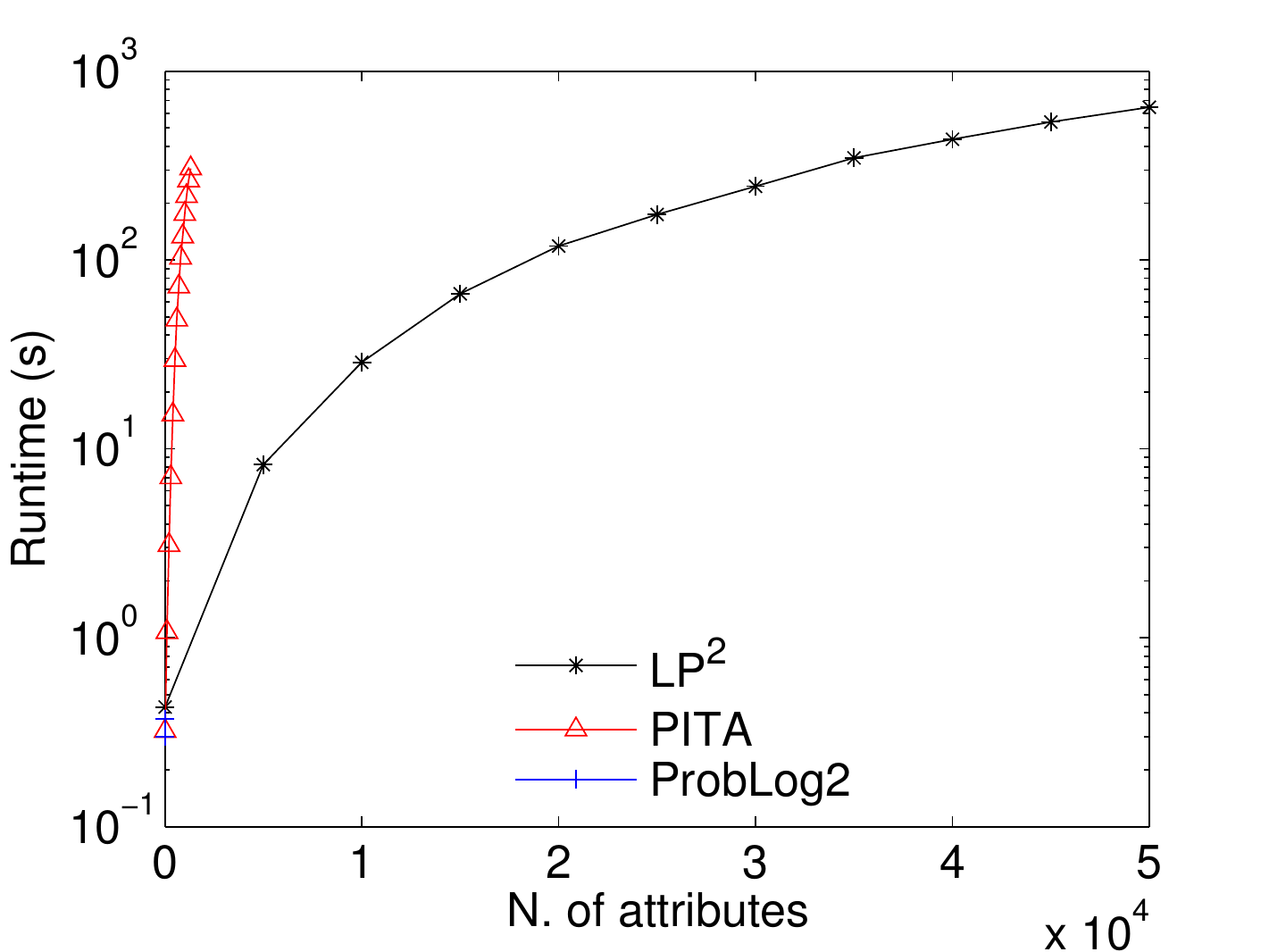}}
\subfigure[\label{watt-ns}\textit{LP$^2$ with up to $10^5$ attributes.}]{\includegraphics[scale=0.425]{./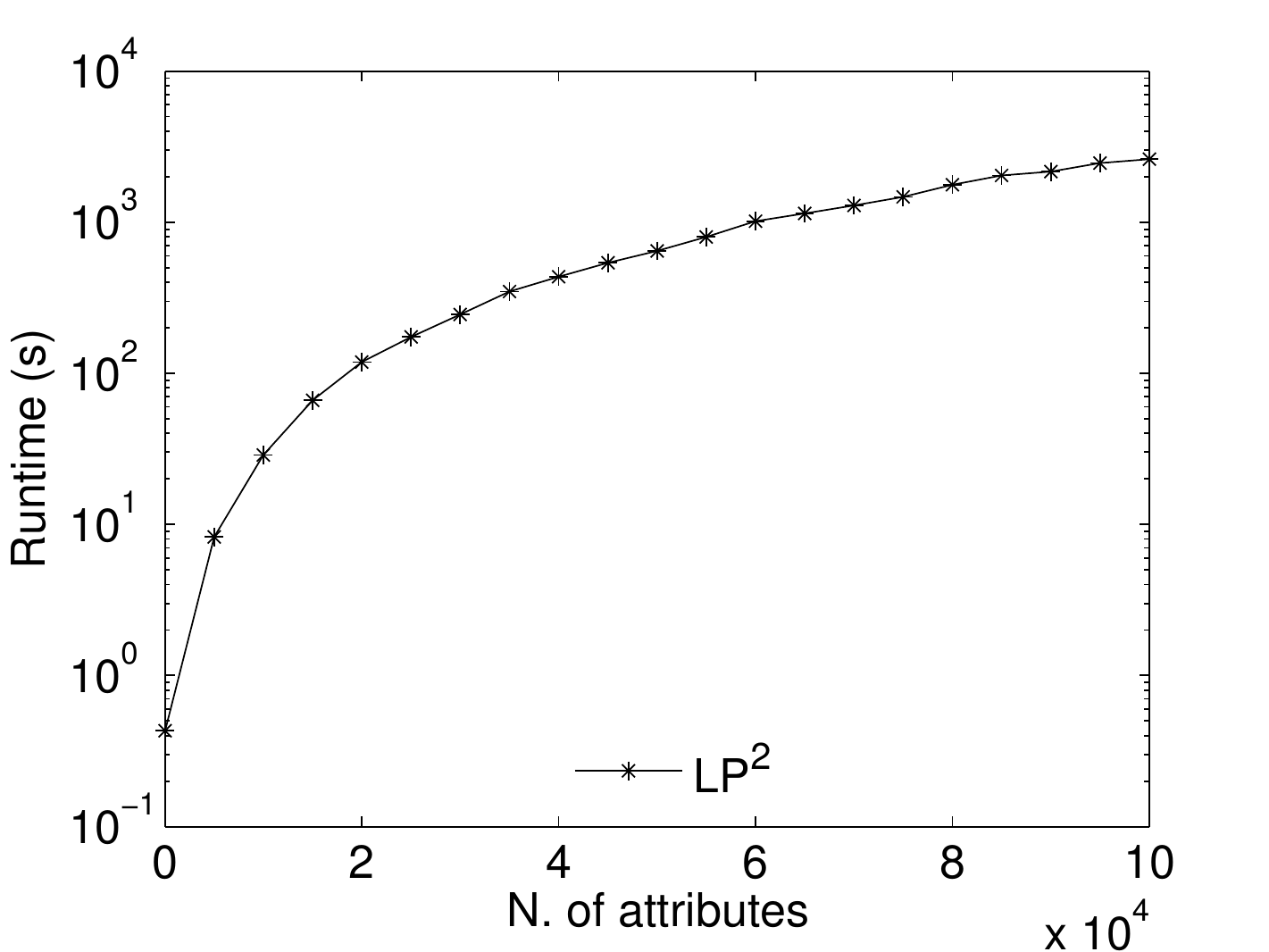}}
\end{center}
\caption{Runtime of LP$^2$, PITA and ProbLog2 on the \textit{workshops attributes}  problems. The Y-axis (runtime) is drawn in log scale. Note that, with the number of people fixed to 50, the problem is intractable by ProbLog2 which can manage at most two attributes.} 
\end{figure} 
Figure \ref{dpoole-pita-problog} shows the runtime of LP$^2$, PITA and ProbLog2 on the \emph{plates} problem, while figure \ref{dpoole-ns} shows that of LP$^2$ with up to $12\times10^4$ $Y$ individuals. 
For this test we executed the query \texttt{f} and we fixed the number of different $X$ individuals to 5 and we increased the number of $Y$ individuals.

\begin{figure*}
\begin{center}
\subfigure
[\label{dpoole-pita-problog}\textit{Comparison.}]{\includegraphics[scale=0.390]{./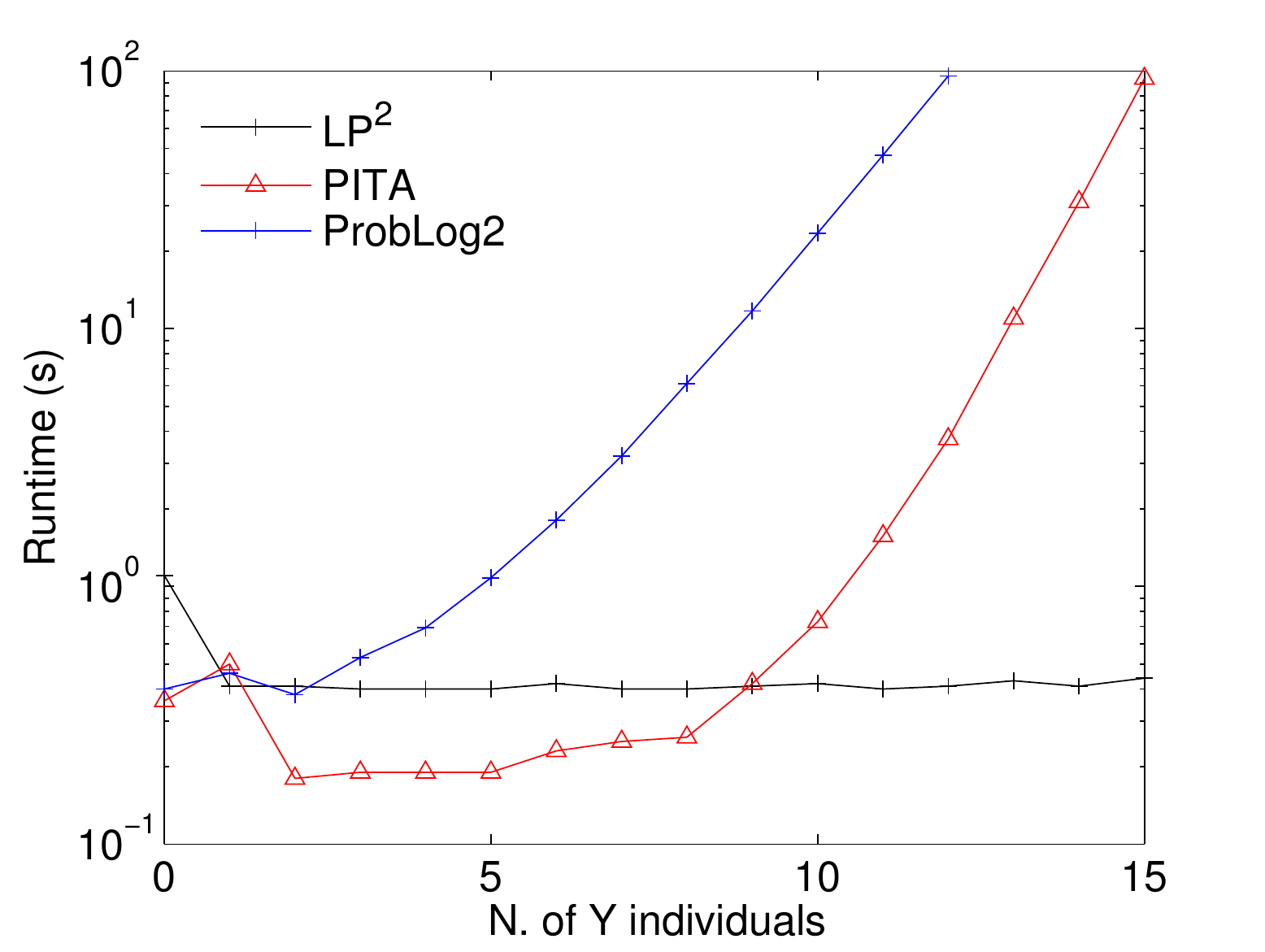}}
\subfigure
[\label{dpoole-ns}\textit{LP$^2$ with up to $12\times 10^4$ Y individuals.}]{\includegraphics[scale=0.425]{./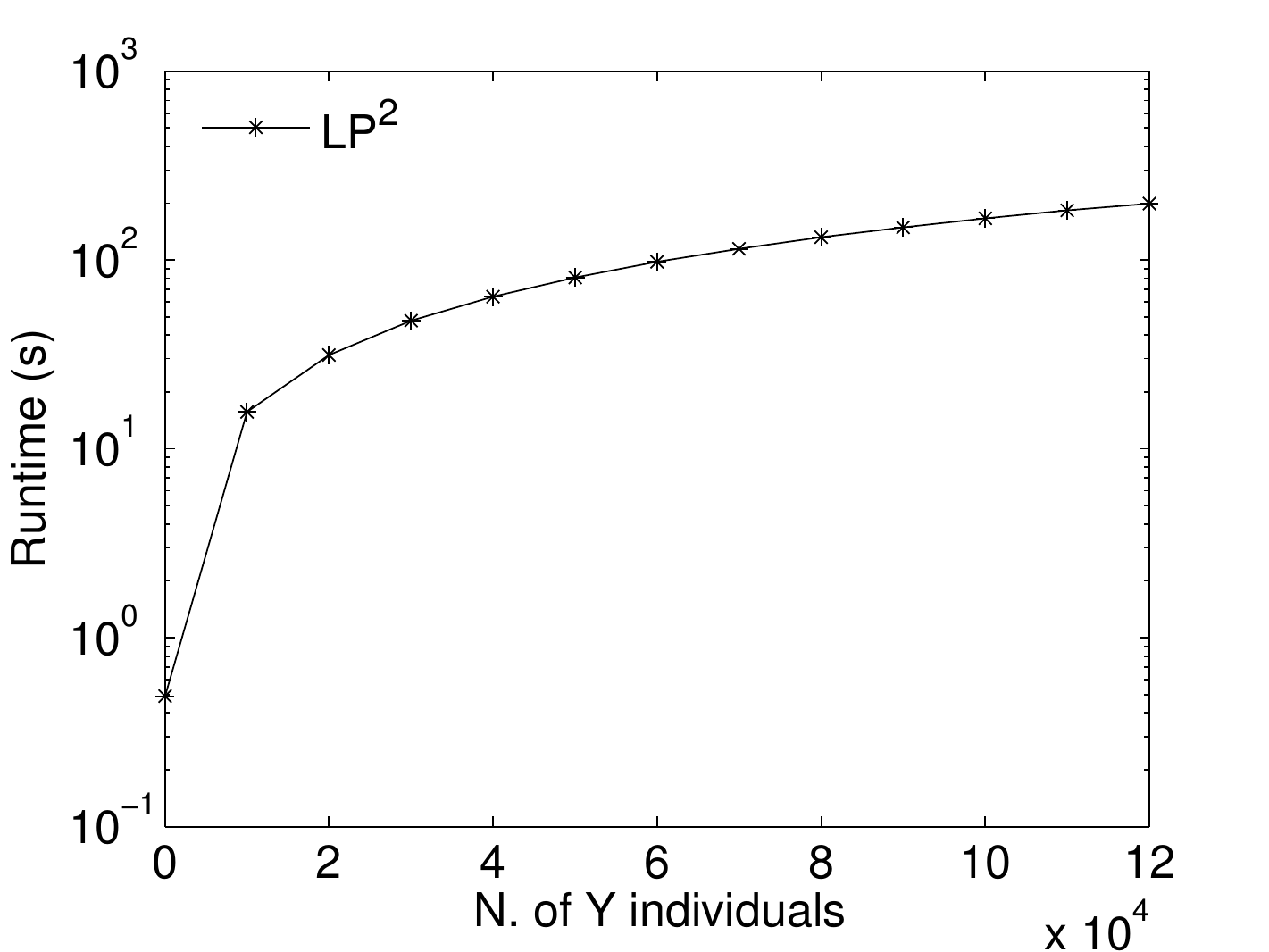}}
\end{center}
\caption{Performance on \textit{plates} with an increasing number of $Y$ individuals. The Y-axis (runtime) is drawn in log scale.} 
\end{figure*} 

Finally, we used the \emph{competing workshops} problem for testing the scalability of LP$^2$. The trend was calculated performing the query \texttt{series} with 10 competing workshops and an increasing number $n$ of people problems.
Figure \ref{wcomp-ns} shows  LP$^2$ computation time. The trend is almost linear in the number of people contained in the problem.

\begin{figure*}
\begin{center}
\includegraphics[scale=0.425]{./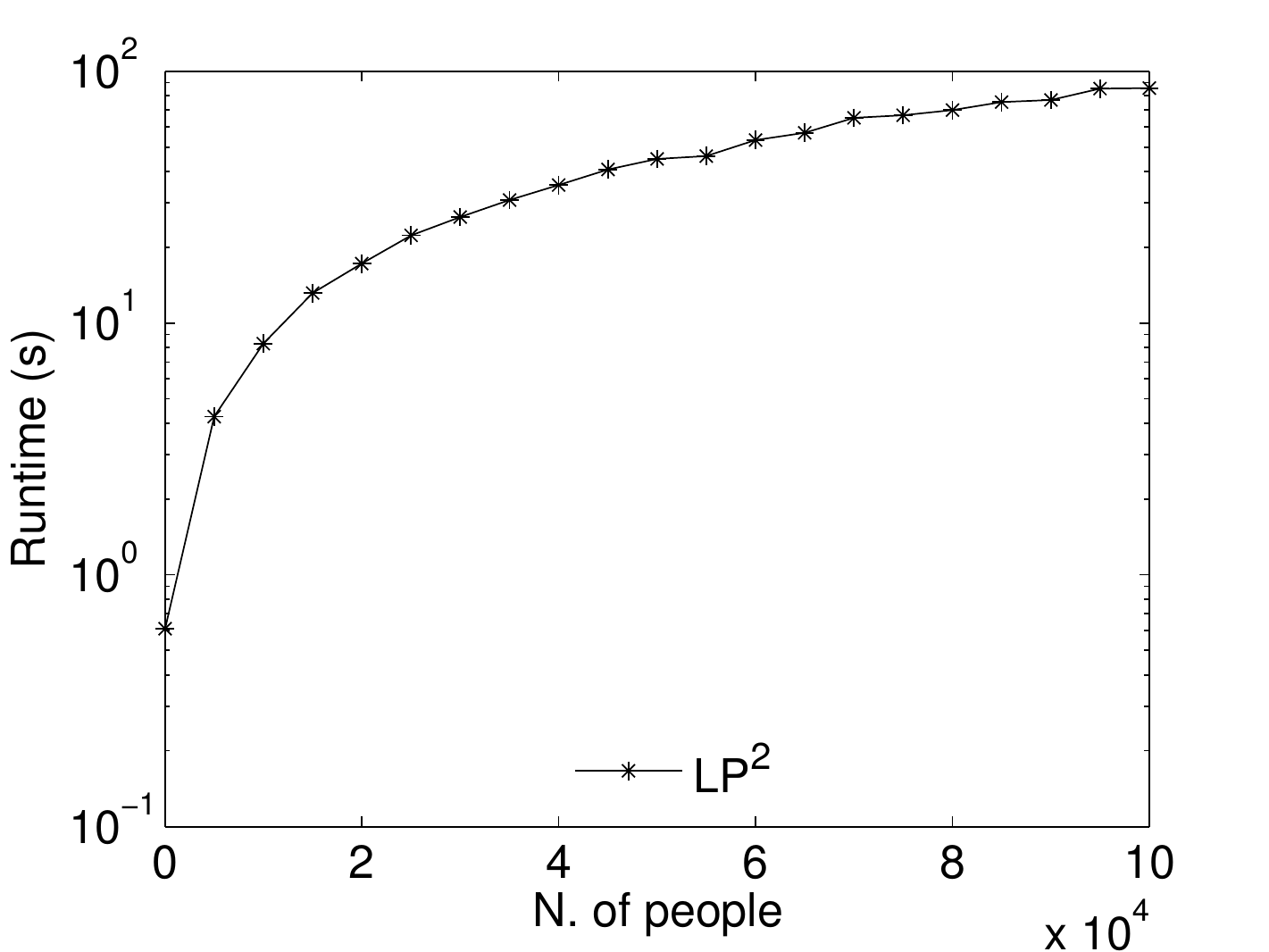}
\end{center}
\caption{\label{wcomp-ns}LP$^2$ runtime in the  \textit{competing workshops} problem. The Y-axis (runtime) is drawn in log scale.}
\end{figure*}

As the results show, LP$^2$ can manage domains that are of several
orders of magnitude larger than the ones managed by PITA and ProbLog2
in a shorter time. 

\section{Conclusions}
\label{conc}
We have shown that the Lifted Variable Elimination approach is very effective at resolving queries w.r.t.\ probabilistic logic programs containing large amount of facts.
We have proved that with the introduction of the two new heterogeneous operators we can compute the probability of queries 
following the distribution semantics in a very efficient way.
Experimental evidence shows that LP$^2$ can achieve several orders of
magnitude improvements, both in runtime and in the number of facts
that can be managed effectively. 
In the future, we plan to compare our approach with that of \cite{DBLP:conf/ijcai/KisynskiP09,DBLP:conf/uai/TakikawaD99,DBLP:journals/ijis/DiezG03} for dealing with noisy or factors, and to compare with weighted first order model counting~\cite{conf/kr/broeck14}.

\textbf{Acknowledgments:} VSC was partially ﬁnanced by the North
Portugal Regional Operational Programme (ON.2 – O Novo Norte), under
the NSRF, through the ERDF and the Funda\c c\~ao para a Ci\^encia e a
Tecnologia within project ADE/PTDC/EIA-EIA/121686/2010.
This work was supported by "National Group of Computing Science (GNCS-INDAM)".

\bibliographystyle{acmtrans}

\appendix
\section{Problems code}
\label{app_programs}
In this section we present the Problog and PFL code of the testing problems.

\subsection{Workshops Attributes}
To all programs of this section we added 50 workshops and an increasing number of attributes.

\paragraph{Problog program}
\begin{footnotesize}
\begin{verbatim}
series:- person(P),attends(P),sa(P).

0.501::sa(P):-person(P).

attends(P):- person(P),attr(A),at(P,A).

0.3::at(P,A):-person(P),attr(A).
\end{verbatim}
\end{footnotesize}

\paragraph{PFL program}
\begin{footnotesize}
\begin{verbatim}
het series1,ch1(P);[1.0, 0.0, 0.0, 1.0];[person(P)].

deputy series,series1;[].

bayes ch1(P),attends(P),sa(P);[1.0,1.0,1.0,0.0,
                               0.0,0.0,0.0,1.0];[person(P)].
                               
bayes sa(P);[0.499,0.501];[person(P)].

het attends1(P),at(P,A);[1.0, 0.0, 0.0, 1.0];[person(P),attr(A)].

deputy attends(P),attends1(P);[person(P)].

bayes at(P,A);[0.7,0.3];[person(P),attr(A)].
\end{verbatim}

\end{footnotesize}

\subsection{Competing Workshops}
For the \emph{competing workshops} problem we report only the PFL version. For testing purpose we added to this code 10 workshops and an increasing number of people.

\paragraph{PFL program}
\begin{footnotesize}
\begin{verbatim}
bayes ch1(P),attends(P),sa(P);[1.0,1.0,1.0,0.0,
                               0.0,0.0,0.0,1.0];[person(P)].
                               
het series1,ch1(P);[1.0, 0.0, 0.0, 1.0];[person(P)].

deputy series,series1;[].

bayes sa(P);[0.499,0.501];[person(P)].

het attends1(P),ch2(P,W);[1.0, 0.0, 0.0, 1.0];[person(P),workshop(W)].

deputy attends(P),attends1(P);[person(P)].

bayes ch2(P,W),hot(W),ah(P,W);[1.0,1.0,1.0,0.0,
                               0.0,0.0,0.0,1.0];[person(P),workshop(W)].

bayes ah(P,W);[0.2,0.8];[person(P),workshop(W)].
\end{verbatim}
\end{footnotesize}

\subsection{Plates}
For tha \emph{plates} problem we added  5 individuals for $X$ and an increasing number of individuals for $Y$.

\paragraph{Problog program}
\begin{footnotesize}
\begin{verbatim}
f:- e(Y).

e(Y) :- d(Y),n1(Y).
e(Y) :- y(Y),\+ d(Y),n2(Y).

d(Y):- c(X,Y).

c(X,Y):-b(X),n3(X,Y).
c(X,Y):- x(X),\+ b(X),n4(X,Y).

b(X):- a, n5(X).
b(X):- \+ a,n6(X).

a:- n7.

0.1::n1(Y) :-y(Y).
0.2::n2(Y) :-y(Y).
0.3::n3(X,Y) :- x(X),y(Y).
0.4::n4(X,Y) :- x(X),y(Y).
0.5::n5(X) :-x(X).
0.6::n6(X) :-x(X).
0.7::n7.
\end{verbatim}
\end{footnotesize}

\paragraph{PFL program}
\begin{footnotesize}
\begin{verbatim}
het f1,e(Y);[1.0, 0.0, 0.0, 1.0];[y(Y)].

deputy f,f1;[].

bayes e1(Y),d(Y),n1(Y);[1.0, 1.0, 1.0, 0.0,
                        0.0, 0.0, 0.0, 1.0];[y(Y)].

bayes e2(Y),d(Y),n2(Y);[1.0, 0.0, 1.0, 1.0,
                        0.0, 1.0, 0.0, 0.0];[y(Y)].

bayes e(Y),e1(Y),e2(Y);[1.0, 0.0, 0.0, 0.0,
                        0.0, 1.0, 1.0, 1.0];[y(Y)].

het d1(Y),c(X,Y);[1.0, 0.0, 0.0, 1.0];[x(X),y(Y)].

deputy d(Y),d1(Y);[y(Y)].

bayes c1(X,Y),b(X),n3(X,Y);[1.0, 1.0, 1.0, 0.0,
                            0.0, 0.0, 0.0, 1.0];[x(X),y(Y)].

bayes c2(X,Y),b(X),n4(X,Y);[1.0, 0.0, 1.0, 1.0,
                            0.0, 1.0, 0.0, 0.0];[x(X),y(Y)].

bayes c(X,Y),c1(X,Y),c2(X,Y);[1.0, 0.0, 0.0, 0.0,
                              0.0, 1.0, 1.0, 1.0];[x(X),y(Y)].

bayes b1(X),a,n5(X);[1.0, 1.0, 1.0, 0.0,
                     0.0, 0.0, 0.0, 1.0];[x(X)].

bayes b2(X),a,n6(X);[1.0, 0.0, 1.0, 1.0,
                     0.0, 1.0, 0.0, 0.0];[x(X)].

bayes b(X),b1(X),b2(X);[1.0, 0.0, 0.0, 0.0,
                        0.0, 1.0, 1.0, 1.0];[x(X)].

bayes a,n7;[1.0, 0.0, 0.0, 1.0];[].

bayes n1(Y);[0.9, 0.1];[y(Y)].
bayes n2(Y);[0.8, 0.2];[y(Y)].
bayes n3(X,Y);[0.7, 0.3];[x(X),y(Y)].
bayes n4(X,Y);[0.6, 0.4];[x(X),y(Y)].
bayes n5(X);[0.5, 0.5];[x(X)].
bayes n6(X);[0.4, 0.6];[x(X)].
bayes n7;[0.3, 0.7];[].
\end{verbatim}

\end{footnotesize}

\section{Definitions}
\label{app_definition}

\begin{definition}[counting formula]
A counting formula is a syntactic construct of the form $\#_{X_i \in C}[F(\mathbf{X})]$, where $X_i \in \mathbf{X}$ is
called the counted logvar.

A $grounded$ counting formula is a counting formula in which all arguments of the atom $F(\mathbf{X})$, except for the counted logvar, are constants. It defines a counting randvar (CRV) as follows.
\end{definition}

\begin{definition}[counting randvar]
A parametrized counting randvar (PCRV) is a pair ($\#_{X_i}[F(\mathbf{X})],C)$. For each
instantiation of $\mathbf{X} \setminus X_i$, it creates a separate counting randvar (CRV). The value of this CRV is a histogram,
and it depends deterministically on the values of $F(\mathbf{X})$.
Given a valuation for $F(\mathbf{X})$, it counts how many different values of $X_i$ occur for each $r \in range(F)$. The result is a \textit{histogram} of the form
$\{(r_1, n_1), \ldots, (r_k, n_k)\}$, with $r_i \in range(F)$
and $n_i$ the corresponding count.
\end{definition}

\begin{definition}[multiplicity]
The multiplicity of a histogram $h = \{(r1, n1),\ldots,(r_k, n_k)\}$ is a multinomial coefficient, defined as
$$\mbox{{\sc Mul}}(h) = \frac{n!}{\prod_{i=1}^{k}n_i!}.$$
\end{definition}
As multiplicities should only be taken into account for (P)CRVs, never for regular PRVs,
we define for each PRV $A$ and for each value $v \in range(A): \mbox{{\sc Mul}}(A, v) = 1$ if $A$ is a regular PRV, and $\mbox{{\sc Mul}}(A, v) = \mbox{{\sc Mul}}(v)$ if $A$ is a PCRV. This {\sc Mul} function is identical to \cite{DBLP:conf/aaai/MilchZKHK08}'s {\sc num-assign}.

\begin{definition}[Count function]
Given a constraint $C_\mathbf{X}$, for any $\mathbf{Y} \subseteq \mathbf{X}$ and $\mathbf{Z} \subseteq \mathbf{X-Y}$, the function
 {\sc{Count}}$_{\mathbf{Y}|\mathbf{X}}: C_\mathbf{X} \rightarrow \mathbb{N}$ is defined as follows:
$$\mbox{{\sc{Count}}}_{\mathbf{Y}|\mathbf{Z}}(t) = |\pi_\mathbf{Y}(\sigma_{\mathbf{Z}=\pi_{\mathbf{Z}}(t)}(C_\mathbf{X}))|$$
That is, for any tuple t, this function tells us how many values for $\mathbf{Y}$ co-occur with t's value
for $\mathbf{Z}$ in the constraint. We define
 {\sc{Count}}$_{\mathbf{Y}|\mathbf{Z}}(t) = 1$ when $\mathbf{Y} = \emptyset$. 
\end{definition}

\begin{definition}[Count-normalized constraint]
For any constraint $C_\mathbf{X}$, $\mathbf{Y} \subseteq \mathbf{X}$ and $\mathbf{Z} \subseteq \mathbf{X-Y}$, $\mathbf{Y}$ is count-normalized w.r.t. $\mathbf{Z}$ in $C_\mathbf{X}$ if and only if
$$\exists n \in \mathbb{N}: \forall t \in C_\mathbf{X}: \mbox{{\sc{Count}}}_{\mathbf{Y}|\mathbf{Z}}(t) = n.$$
When such an $n$ exists, we call it the conditional count of $\mathbf{Y}$ given $\mathbf{Z}$ in $C_\mathbf{X}$, and denote it
{\sc{Count}}$_{\mathbf{Y}|\mathbf{Z}}(C_\mathbf{X})$.
\end{definition}

\begin{definition} [substitution]
A substitution $\theta = \{X_1 \rightarrow t_1, \ldots, X_n \rightarrow t_n\} = \{\mathbf{X} \rightarrow \mathbf{t}\}$
maps each logvar $X_i$ to a term $t_i$, which can be a constant or a logvar. When all $t_i$ are constants, $\theta$ is called a grounding substitution, and when all are different logvars, a
renaming substitution. Applying a substitution $\theta$ to an expression $\alpha$ means replacing each occurrence of $X_i$ in $\alpha$ with $t_i$; the result is denoted $\alpha\theta$.
\end{definition}

\begin{definition}[alignment]
An alignment $\theta$ between two parfactors $g = \phi(\mathcal{A})|C$ and $g'= \phi'(\mathcal{A}')|C'$ is a one-to-one substitution $\{\mathbf{X} \rightarrow \mathbf{X}'\}$, with $\mathbf{X} \subseteq logvar(\mathcal{A})$ and $\mathbf{X'} \subseteq logvar(\mathcal{A}')$,
such that $\rho(\pi_\mathbf{X'}(C)) = \pi_\mathbf{X'}(C')$ (with $\rho$ the attribute renaming operator).
\end{definition}
\noindent An alignment tells the multiplication operator that two atoms in two different parfactors represent the same PRV, so it suffices to include it in the resulting parfactor only once.

\section{Correctness proof for heterogeneous multiplication}
\label{app_proof}
%
%
%

\begin{theorem}
Given a model $(\mathcal{F}_1,\mathcal{F}_2)$, two heterogeneous parfactors $g_1,g_2\in\mathcal{F}_2$ and an alignment $\theta$ between $g_1$ and $g_2$, if 
the preconditions of the {\sc het-multiply} operator are fulfilled  then the postcondition
{\small
$$G\setminus\{g_1,g_2\}\cup\{\textsc{het-multiply}(g_1,g_2,\theta)\}$$ }
holds.
\end{theorem}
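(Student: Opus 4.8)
The plan is to descend to the ground level, argue the rewrite is semantically neutral there, and then reuse the constraint bookkeeping of ordinary Lifted Multiplication. Recall that $G\sim G'$ means $\mathcal{P}_G=\mathcal{P}_{G'}$, and that $\mathcal{P}_G$ is obtained from $\mathrm{gr}(\mathcal{F}_1)\cup\mathrm{gr}(\mathcal{F}_2)$ by multiplying the homogeneous ground factors and combining, with the $\otimes$ of~\cite{DBLP:journals/jair/ZhangP96}, each maximal group of heterogeneous ground factors sharing convergent randvars. Since {\sc het-multiply} removes only $g_1,g_2$, inserts $g:=\textsc{het-multiply}(g_1,g_2,\theta)$, and leaves the homogeneous and deputy factors untouched, it suffices to show that $\mathrm{gr}(g)$ and $\mathrm{gr}(g_1)\cup\mathrm{gr}(g_2)$ have the same $\otimes$-combination over the shared convergent randvars (where $\otimes$ of heterogeneous factors with disjoint convergent randvars is just juxtaposition of the regular arguments). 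Commutativity and associativity of $\otimes$ on factors sharing a common set of convergent randvars make this combination well defined and order-independent, so what is left is a matching of two multisets of ground factors.

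For the ground-level identity: since $C=C_1\theta\bowtie C_2$, each $t\in C$ projects to $t_1\in C_1\theta$ and $t_2\in C_2$, and Output (4) of the operator states exactly that $\mathrm{gr}(g)[t]=(\phi_1^{t_1})^{1/r_2}\otimes(\phi_2^{t_2})^{1/r_1}$ --- the $\otimes$ here only placing $\mathbf{b}_1,\mathbf{b}_2$ side by side, the inner sums $a_{1j}\vee a_{2j}=a_j$ being those of $\otimes$. Grouping the ground factors of $g$ by their convergent randvars and using the count-normalization preconditions (each $t_1$ has exactly $r_2$ preimages in $C$, each $t_2$ exactly $r_1$), the $\otimes$-combination of a group rearranges, by comm./assoc., into an $\otimes$-combination of $r_2$ copies of each $(\phi_1^{t_1})^{1/r_2}$ together with $r_1$ copies of each $(\phi_2^{t_2})^{1/r_1}$; the corresponding regrouping of $\mathrm{gr}(g_1)\cup\mathrm{gr}(g_2)$ yields one copy of each $\phi_1^{t_1}$ and each $\phi_2^{t_2}$. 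The two then match once we establish the single algebraic fact that, inside a larger $\otimes$-combination, an $r$-fold $\otimes$ of $\phi^{1/r}$ behaves as one copy of $\phi$ --- the heterogeneous counterpart of ``$r$ factors $\phi^{1/r}$ multiply to $\phi$''.

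I would prove that fact in the ``cumulative'' (reachability) representation of a heterogeneous factor, i.e.\ the values $\Phi(\le\mathbf{a}',\cdot)=\sum_{\mathbf{a}\le\mathbf{a}'}\phi(\mathbf{a},\cdot)$ that {\sc het-sum-out} already uses: there $\otimes$ is a pointwise product, so a group's combination becomes a pointwise product of cumulative tables in which the count exponents cancel level by level, and a final M\"obius inversion over the product order $\le$ recovers the potentials. This is also the place to fix, consistently with GC-FOVE's conventions, the treatment of logvars that survive only in $C$, and to check that the set of convergent randvars mentioned is preserved. Putting these together gives $\mathrm{gr}(g)$ and $\mathrm{gr}(g_1)\cup\mathrm{gr}(g_2)$ the same $\otimes$-combined heterogeneous factors, hence $\mathcal{P}_G=\mathcal{P}_{G'}$.

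The constraint-level bookkeeping --- the join $C_1\theta\bowtie C_2$, the exponents $r_i=\textsc{Count}_{\mathbf{Y}_i\mid\mathbf{X}_i}(C_i)$, count-normalization, and the alignment $\theta$ --- is exactly that of Lifted Multiplication in GC-FOVE and can be imported from~\cite{DBLP:journals/jair/TaghipourFDB13} almost verbatim. What is genuinely new, and what I expect to be the main obstacle, is the algebraic core above: verifying that count-normalization composes with $\otimes$ rather than with ordinary multiplication. Once recast in the cumulative representation this reduces to the homogeneous argument plus the associativity and commutativity of $\otimes$, but getting that recasting exactly right --- the M\"obius inversion, the behaviour of the $1/r_i$ exponents under $\otimes$, and the counted-logvar conventions --- is the delicate part.
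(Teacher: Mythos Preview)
The paper disposes of this theorem in a single line --- ``Immediate from the definition of heterogeneous multiplication'' --- treating correctness of \textsc{het-multiply} as essentially definitional. Your proposal, by contrast, tries to carry out a rigorous ground-level verification and is far more detailed than anything the paper offers here.

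The gap lies in the algebraic core you yourself flag as ``the delicate part''. You claim that an $r$-fold $\otimes$-combination of $\phi^{1/r}$ behaves as one copy of $\phi$, and propose to establish this in the cumulative representation $\Phi(\le\mathbf{a}',\cdot)=\sum_{\mathbf{a}\le\mathbf{a}'}\phi(\mathbf{a},\cdot)$, where $\otimes$ indeed becomes a pointwise product. But in Operator~\ref{het-mul} the exponent $1/r_i$ is applied \emph{entrywise to the raw potentials}, not to the cumulative sums, and these two operations do not commute: the cumulative transform of $\phi^{1/r}$ at level $\mathbf{a}'$ is $\sum_{\mathbf{a}\le\mathbf{a}'}\phi(\mathbf{a},\cdot)^{1/r}$, not $\bigl(\sum_{\mathbf{a}\le\mathbf{a}'}\phi(\mathbf{a},\cdot)\bigr)^{1/r}$. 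Hence the $r$-fold pointwise product in cumulative coordinates gives $\bigl(\sum_{\mathbf{a}\le\mathbf{a}'}\phi(\mathbf{a})^{1/r}\bigr)^{r}$, which in general differs from $\sum_{\mathbf{a}\le\mathbf{a}'}\phi(\mathbf{a})$; e.g.\ with one Boolean convergent variable and $r=2$ one gets $(\phi(f)^{1/2}+\phi(t)^{1/2})^2=\phi(f)+2\sqrt{\phi(f)\phi(t)}+\phi(t)$. So the ``exponents cancel level by level'' step fails and the M\"obius inversion does not recover $\phi$. Entrywise exponentiation simply does not interact with $\otimes$ the way it does with ordinary multiplication; your argument goes through cleanly only when $r_1=r_2=1$ (full alignment), which is in fact the situation produced by the paper's ProbLog translations. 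The paper's one-line proof does not confront this issue either, but your plan does not resolve it.
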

\begin{proof}
Immediate from the definition of heterogeneous multiplication.
\end{proof}
\begin{theorem}
Given a model $(\mathcal{F}_1,\mathcal{F}_2)$, a heterogeneous parfactor $g\in\mathcal{F}_2$ and an atom $A_{k+1}$ 
to be summed out, if the preconditions of the {\sc het-sum-out} operator are fulfilled  then the postcondition
{\small
$$\mathcal{P}_{G\setminus\{g\}\cup\{\textsc{het-sum-out}(g,(A_1,\ldots,A_k),A_{k+1})\}}=\sum_{RV(A_{k+1})}\mathcal{P}_\mathcal{G}$$} holds.
\end{theorem}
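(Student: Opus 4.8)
The plan is to show that \textsc{het-sum-out}, once grounded, reproduces exactly the ground factors that a legitimate schedule of \textsc{ve1} operations (ordinary sum-out plus heterogeneous multiplication $\otimes$) would produce from the groundings of $g$, and then to inherit correctness from \cite{DBLP:journals/jair/ZhangP96} together with the soundness of lifted operations that agree with their grounding. So the argument has three parts: identify the intended ground computation, show the closed form of Operator~\ref{het-sum-out} computes it, and assemble the postcondition.

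\emph{The ground target.} By Precondition~(1) the randvars $RV(A_{k+1}\mid C)$ occur only in groundings of $g$, so marginalising them out of $\mathcal{P}_G$ is a local operation on $g$, the standard justification for sum-out. By Precondition~(2), fixing the constants of $\mathbf{X}^{com}=logvar(A_{k+1})\cap logvar(\mathcal{A}\setminus A_{k+1})$ grounds $A_1,\ldots,A_k$ and $\mathcal{B}$ (their non-singleton logvars all lie in $A_{k+1}$, hence in $\mathbf{X}^{com}$), so the groundings of $g$ split into blocks, one per valuation of $\mathbf{X}^{com}$; by Precondition~(3) each block contains exactly $r=\textsc{Count}_{\mathbf{X}^{excl}\mid\mathbf{X}^{com}}(C)$ ground factors, which share the ground copies of the convergent atoms $A_1,\ldots,A_k$ but carry distinct copies $A_{k+1}^{(1)},\ldots,A_{k+1}^{(r)}$ of $A_{k+1}$ (this block structure, with $\mathcal{B}$ already ground inside a block, is why \textsc{het-sum-out} needs fewer count-normalisation hypotheses than \textsc{het-multiply}). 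A valid \textsc{ve1} schedule eliminates the $r$ copies of $A_{k+1}$ one by one and then $\otimes$-combines the $r$ resulting heterogeneous factors; since $A_{k+1}$ is non-convergent and each copy lives in a single factor, sum-out commutes with $\otimes$, so the per-block result is $\chi^{\otimes r}$ with $\chi(a_1,\ldots,a_k,\mathbf{b})=\sum_{a_{k+1}\in range(A_{k+1})}\textsc{Mul}(A_{k+1},a_{k+1})\,\phi(a_1,\ldots,a_k,a_{k+1},\mathbf{b})$, the $\textsc{Mul}$ factor being $1$ for a plain PRV and the multinomial weight for a PCRV. Here $\otimes$ is the ground version (all relative counts equal to $1$), so no fractional exponents appear and the single lifted count $r$ absorbs all repetition.

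\emph{The closed form and assembly.} The key lemma is that the down-set sum $\widehat{\psi}(\mathbf{a}')=\sum_{\mathbf{a}\le\mathbf{a}'}\psi(\mathbf{a})$ over the Boolean lattice $\{f,t\}^k$ of the convergent coordinates turns $\otimes$ into ordinary pointwise product: on those coordinates $\otimes$ is convolution for componentwise disjunction, and $\mathbf{a}_1\vee\mathbf{a}_2\le\mathbf{a}'$ iff $\mathbf{a}_1\le\mathbf{a}'$ and $\mathbf{a}_2\le\mathbf{a}'$, whence $\widehat{\psi_1\otimes\psi_2}=\widehat{\psi_1}\cdot\widehat{\psi_2}$ and, by iteration, $\widehat{\chi^{\otimes r}}=\widehat{\chi}^{\,r}$. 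Expanding $\widehat{\chi}$ gives $\widehat{\chi^{\otimes r}}(\mathbf{a}',\mathbf{b})=\bigl(\sum_{\mathbf{a}\le\mathbf{a}'}\sum_{a_{k+1}}\textsc{Mul}(A_{k+1},a_{k+1})\phi(a_1,\ldots,a_k,a_{k+1},\mathbf{b})\bigr)^{r}$, which is precisely the first term of clause~(3) of Operator~\ref{het-sum-out}; Möbius inversion on the same lattice, $\psi(\mathbf{a}')=\widehat{\psi}(\mathbf{a}')-\sum_{\mathbf{a}<\mathbf{a}'}\psi(\mathbf{a})$, recovers $\chi^{\otimes r}$ from its down-set sum and is exactly the subtraction term (I would cross-check the worked examples in the text to pin down the orders $\le$ and $<$). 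Hence the grounding of the output $\phi'$ coincides block by block with $\chi^{\otimes r}$. Since \textsc{het-sum-out} eliminates only $RV(A_{k+1})$ (its internal $\otimes$-combinations remove no variable) and otherwise coincides with a legitimate \textsc{ve1} schedule on the groundings of $g$, and that schedule preserves $\mathcal{P}$ up to marginalising $RV(A_{k+1})$ by \cite{DBLP:journals/jair/ZhangP96}, the postcondition $\mathcal{P}_{G\setminus\{g\}\cup\{\textsc{het-sum-out}(g,(A_1,\ldots,A_k),A_{k+1})\}}=\sum_{RV(A_{k+1})}\mathcal{P}_G$ follows.

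The hard part will be the lemma of the previous paragraph: establishing crisply that $\otimes$ is join-convolution on the product Boolean lattice and is therefore diagonalised by the down-set transform, so that the $r$-fold $\otimes$-combination of equal, already count-normalised factors becomes a genuine $r$-th power with no residual scaling. A secondary nuisance is the PCRV case, where one must verify that the $\textsc{Mul}$ multiplicities stay attached to the counted argument of $A_{k+1}$ and pass through the down-set transform unchanged, and that count-normalisation of $\mathbf{X}^{excl}$ w.r.t.\ $\mathbf{X}^{com}$ genuinely forces every block to have the same $r$.
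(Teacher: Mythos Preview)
Your argument is correct and takes a genuinely different route from the paper. The paper proves the formula in Operator~\ref{het-sum-out} by a double induction on $r=\textsc{Count}_{\mathbf{X}^{excl}\mid\mathbf{X}^{com}}(C)$ and on the number $n$ of coordinates equal to $t$ in $(a'_1,\ldots,a'_k)$: it expands $\phi'_r$ as the heterogeneous product $\phi'_{r-1}\otimes\phi'_1$, adds and subtracts the tail $\sum_{\mathbf{a}<\mathbf{a}'}(\ldots)$, and then pushes through a long chain of algebraic rearrangements to recover the closed form. What you do instead is isolate the structural reason the closed form exists: $\otimes$ is join-convolution on the Boolean lattice $\{f,t\}^k$, the down-set (zeta) transform $\widehat{\psi}(\mathbf{a}')=\sum_{\mathbf{a}\le\mathbf{a}'}\psi(\mathbf{a})$ turns it into pointwise product, so $\widehat{\chi^{\otimes r}}=\widehat{\chi}^{\,r}$, and the recursive M\"obius inversion $\psi(\mathbf{a}')=\widehat{\psi}(\mathbf{a}')-\sum_{\mathbf{a}<\mathbf{a}'}\psi(\mathbf{a})$ is exactly the subtraction term of clause~(3). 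The paper's induction is in fact re-deriving this transform identity inline at every step (its passage from $\sum_{\hat a\vee\check a=a'}$ plus the added tail to $\sum_{\hat a\le a'}\sum_{\check a\le a'}$ is your lemma), so the two proofs agree on content; yours is shorter, names the mechanism, and makes clear why the exponent $r$ appears with no residual scaling. The paper's version, by contrast, is self-contained and requires no lattice vocabulary, which may suit readers unfamiliar with zeta/M\"obius transforms.
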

\begin{proof}
We prove the formula giving $phi'$ in Operator \ref{het-sum-out} by double induction over \linebreak$r=\textsc{Count}_{\mathbf{X}^{excl}|\mathbf{X}^{com}}(C)$ and the number $n$ of values at $t$ in the tuple $(a'_1,\ldots,a'_k)$.
For simplicity we assume that the variable to be summed out, $A_{k+1}$, is not a counting variable, but the same reasoning can be applied for a counting variable.
For $n=0$, $r=1$
{\small
$$\phi'(\textit{f},\ldots,\textit{f},\mathbf{b})=\phi(\textit{f},\ldots,\textit{f},\textit{f},\mathbf{b})+\phi(\textit{f},\ldots,\textit{f},t,\mathbf{b})$$}
so the thesis is proved.
For $n=0$, $r>1$, let us call $\phi'_{r}(a'_1,\ldots,a'_k,\mathbf{b})$ the the value of $\phi'$ for $r$. Let us assume that the formula holds for $r-1$. 
For $r>1$, there is an extra valuation $\mathbf{x}_{excl}$ for $\mathbf{X}_{excl}$ given $\mathbf{X}_{com}$ so there is an extra factor $g''(\mathbf{x}_{excl})$. Eliminating $A_{k+1}$ from $g''$ gives
{\small
$$\phi'_1(\textit{f},\ldots,\textit{f},\mathbf{b})=\phi(\textit{f},\ldots,\textit{f},\textit{f},\mathbf{b}) +\phi(\textit{f},\ldots,\textit{f}, \textit{t},\mathbf{b})$$}
This must be multiplied by $\phi'_{r-1}$ with heterogeneous multiplication as $A_1,\ldots,A_k$ are shared obtaining
{\footnotesize
$$\phi'_r(\textit{f},\ldots,\textit{f},\mathbf{b})=
\sum_{\hat{a}_1 \vee \check{a}_1=\textit{f}}\ldots\sum_{\hat{a}_{k} \vee \check{a}_k=\textit{f}}\phi'_{r-1}(\hat{a}_1,\ldots,\hat{a}_k,\mathbf{b})\times \phi'_1(\check{a}_1,\ldots,\check{a}_k,\textit{f},\mathbf{b})=$$
$$=\phi'_{r-1}(\textit{f},\ldots,\textit{f},\mathbf{b})\times \phi'_1(\check{a}_1,\ldots,\check{a}_k,\textit{f},\mathbf{b})=$$
$$=(\phi(\textit{f},\ldots,\textit{f},\textit{f},\mathbf{b}) +\phi(\textit{f},\ldots,\textit{f}, \textit{t},\mathbf{b}))^{r-1}\times
(\phi(\textit{f},\ldots,\textit{f},\textit{f},\mathbf{b}) +\phi(\textit{f},\ldots,\textit{f}, \textit{t},\mathbf{b}))=$$
$$=(\phi(\textit{f},\ldots,\textit{f},\textit{f},\mathbf{b}) +\phi(\textit{f},\ldots,\textit{f}, \textit{t},\mathbf{b}))^r$$
}
so the thesis is proved.

For $n>0$ of values at $t$ in the tuple $\mathbf{a}'=(a'_1,\ldots,a'_k)$, we assume that the formula holds for $(n-1,r)$ and $(n,r-1)$.
For the defintiion of heterogeneous multiplication
{\footnotesize
\begin{eqnarray*}
\phi'_r(a'_1,\ldots,a'_{k},\mathbf{b})&=&\sum_{\hat{a}_1 \vee \check{a}_1=a'_1}\ldots\sum_{\hat{a}_{k} \vee \check{a}_k=a'_{k}}\phi'_{r-1}(\hat{a}_1,\ldots,\hat{a}_{k},\mathbf{b})\phi'_1(\check{a}_1,\ldots,\check{a}_{k},\mathbf{b})
\end{eqnarray*}
}
By adding and removing 

{\small
$\sum_{\mathbf{a}<\mathbf{a}'}\sum_{\hat{a}_1 \vee \check{a}_1=a_1}\ldots\sum_{\hat{a}_{k} \vee \check{a}_k=a_{k}}\phi'_{r-1}(\hat{a}_1,\ldots,\hat{a}_{k},\mathbf{b})\phi'_1(\check{a}_1,\ldots,\check{a}_{k},\mathbf{b})$}

\noindent we get

{\small
$\phi'_r(a'_1,\ldots,a'_{k},\mathbf{b})=$
\begin{eqnarray*}
&=&\sum_{\hat{a}_1 \vee \check{a}_1=a'_1}\ldots\sum_{\hat{a}_{k} \vee \check{a}_k=a'_{k}}\phi'_{r-1}(\hat{a}_1,\ldots,\hat{a}_{k},\mathbf{b})\phi'_1(\check{a}_1,\ldots,\check{a}_{k},\mathbf{b})+\\
&&+\sum_{\mathbf{a}<\mathbf{a}'}\sum_{\hat{a}_1 \vee \check{a}_1=a_1}\ldots\sum_{\hat{a}_{k} \vee \check{a}_k=a_{k}}\phi'_{r-1}(\hat{a}_1,\ldots,\hat{a}_{k},\mathbf{b})\phi'_1(\check{a}_1,\ldots,\check{a}_{k},\mathbf{b})-\\
&&-\sum_{\mathbf{a}<\mathbf{a}'}\sum_{\hat{a}_1 \vee \check{a}_1=a_1}\ldots\sum_{\hat{a}_{k} \vee \check{a}_k=a_{k}}\phi'_{r-1}(\hat{a}_1,\ldots,\hat{a}_{k},\mathbf{b})\phi'_1(\check{a}_1,\ldots,\check{a}_{k},\mathbf{b})=\\
&=&\sum_{\hat{a}_1 \leq a'_1}\sum_{\check{a}_1\leq a'_1}\ldots\sum_{\hat{a}_{k} \leq a'_k}\sum_{\check{a}_k\leq a'_{k}}\phi'_{r-1}(\hat{a}_1,\ldots,\hat{a}_{k},\mathbf{b})\phi'_1(\check{a}_1,\ldots,\check{a}_{k},\mathbf{b})-\\
&&-\sum_{\mathbf{a}<\mathbf{a}'}\sum_{\hat{a}_1 \vee \check{a}_1=a_1}\ldots\sum_{\hat{a}_{k} \vee \check{a}_k=a_{k}}\phi'_{r-1}(\hat{a}_1,\ldots,\hat{a}_{k},\mathbf{b})\phi'_1(\check{a}_1,\ldots,\check{a}_{k},\mathbf{b})=\\
\end{eqnarray*}
}
For the definition of heterogeneous multiplication we obtain
{\small
$\phi'_r(a'_1,\ldots,a'_{k},\mathbf{b})=$
\begin{eqnarray*}
&=&\left(\sum_{\hat{\mathbf{a}}\leq \mathbf{a}'}\phi'_{r-1}(\hat{a}_1,\ldots,\hat{a}_{k},\mathbf{b})\right)\times\left(\sum_{\check{\mathbf{a}}\leq \mathbf{a}'}\phi'_1(\check{a}_1,\ldots,\check{a}_{k},\mathbf{b})\right)-\sum_{\mathbf{a}<\mathbf{a}'}\phi'_{r}(a_1,\ldots,a_k,\mathbf{b})
\end{eqnarray*}
}

By applying the inductive hypothesis for $r-1$ we get
{\small
$\phi'_r(a'_1,\ldots,a'_{k},\mathbf{b})=$}
{\footnotesize
\begin{eqnarray*}
&=&\left(\sum_{\hat{\mathbf{a}} \leq \mathbf{a}'}\left(\sum_{\mathbf{a}\leq \hat{\mathbf{a}}}
\phi(a_1,\ldots,a_{k},\textit{f},\mathbf{b}) +\phi(a_1,\ldots,a_{k},\textit{t},\mathbf{b})\right)^{r-1}-\sum_{\mathbf{a}<\hat{\mathbf{a}}}\phi'_{r-1}(a_1,\ldots,a_{k},\mathbf{b})\right)\times\\
&&\times\left(\sum_{\check{\mathbf{a}}\leq \mathbf{a}'}\phi'_1(\check{a}_1,\ldots,\check{a}_{k},\mathbf{b})\right)-\sum_{\mathbf{a}<\mathbf{a}'}\phi'_{r}(a_1,\ldots,a_k,\mathbf{b})=\\
&=&\left(\sum_{\hat{\mathbf{a}}\leq \mathbf{a}'}\left(\sum_{\mathbf{a} \leq \hat{\mathbf{a}}}
\phi(a_1,\ldots,a_{k},\textit{f},\mathbf{b}) +\phi(a_1,\ldots,a_{k},\textit{t},\mathbf{b})\right)^{r-1}\right)\times\left(\sum_{\check{\mathbf{a}}\leq \mathbf{a}'}\phi'_1(\check{a}_1,\ldots,\check{a}_{k},\mathbf{b})\right)-\\
&&-\left(\sum_{\hat{\mathbf{a}} \leq \mathbf{a}'}\sum_{\mathbf{a}<\hat{\mathbf{a}}}\phi'_{r-1}(a_1,\ldots,a_{k},\mathbf{b})\right)\times\left(\sum_{\check{\mathbf{a}}\leq \mathbf{a}'}\phi'_1(\check{a}_1,\ldots,\check{a}_{k},\mathbf{b})\right)\\
&&-\sum_{\mathbf{a}<\mathbf{a}'}\phi'_{r}(a_1,\ldots,a_k,\mathbf{b})=\\
&=&\left(\sum_{\mathbf{a}\leq \mathbf{a}'}
\phi(a_1,\ldots,a_{k},\textit{f},\mathbf{b}) +\phi(a_1,\ldots,a_{k},\textit{t},\mathbf{b})\right)^{r-1}\times\left(\sum_{\mathbf{a}\leq \mathbf{a}'}\phi'_1(\check{a}_1,\ldots,\check{a}_{k},\mathbf{b})\right)+\\
&&+\left(\sum_{\hat{\mathbf{a}}< \mathbf{a}'}\left(\sum_{\mathbf{a}\leq \hat{\mathbf{a}}}
\phi(a_1,\ldots,a_{k},\textit{f},\mathbf{b}) +\phi(a_1,\ldots,a_{k},\textit{t},\mathbf{b})\right)^{r-1}\right)\times\left(\sum_{\check{\mathbf{a}}\leq \mathbf{a}'}\phi'_1(\check{a}_1,\ldots,\check{a}_{k},\mathbf{b})\right)-\\
&&-\left(\sum_{\hat{\mathbf{a}} \leq\mathbf{a}'}\sum_{\mathbf{a}<\hat{\mathbf{a}}}\phi'_{r-1}(a_1,\ldots,a_{k},\mathbf{b})\right)\times\left(\sum_{\check{\mathbf{a}}\leq \mathbf{a}'}\phi'_1(\check{a}_1,\ldots,\check{a}_{k},\mathbf{b})\right)-\\
&&-\sum_{\mathbf{a}<\mathbf{a}'}\phi'_{r}(a_1,\ldots,a_k,\mathbf{b})=
\end{eqnarray*}
}

By applying the formula for $r=1$\\
{\small
$\phi'_r(a'_1,\ldots,a'_{k},\mathbf{b})=$}
{\small
\begin{eqnarray*}
&=&\left(\sum_{\mathbf{a}\leq \mathbf{a}'}
\phi(a_1,\ldots,a_{k},\textit{f},\mathbf{b}) +\phi(a_1,\ldots,a_{k},\textit{t},\mathbf{b})\right)^{r-1}\times\left(\sum_{\mathbf{a}\leq \mathbf{a}'}\phi(a_1,\ldots,a_{k},\textit{f},\mathbf{b}) +\phi(a_1,\ldots,a_{k},\textit{t},\mathbf{b})\right)+\\
&&+\left(\sum_{\hat{\mathbf{a}}< \mathbf{a}'}\left(\sum_{\mathbf{a}\leq \hat{\mathbf{a}}}
\phi(a_1,\ldots,a_{k},\textit{f},\mathbf{b}) +\phi(a_1,\ldots,a_{k},\textit{t},\mathbf{b})\right)^{r-1}\right)\times\left(\sum_{\check{\mathbf{a}}\leq \mathbf{a}'}\phi'_1(\check{a}_1,\ldots,\check{a}_{k},\mathbf{b})\right)-\\
&&-\left(\sum_{\hat{\mathbf{a}} <\mathbf{a}'}\sum_{\mathbf{a}<\hat{\mathbf{a}}}\phi'_{r-1}(a_1,\ldots,a_{k},\mathbf{b})\right)\times\left(\sum_{\check{\mathbf{a}}\leq \mathbf{a}'}\phi'_1(\check{a}_1,\ldots,\check{a}_{k},\mathbf{b})\right)-\\
&&-\sum_{\mathbf{a}<\mathbf{a}'}\phi'_{r}(a_1,\ldots,a_k,\mathbf{b})=\\
&=&\left(\sum_{\mathbf{a}\leq \mathbf{a}'}
\phi(a_1,\ldots,a_{k},\textit{f},\mathbf{b}) +\phi(a_1,\ldots,a_{k},\textit{t},\mathbf{b})\right)^{r}-\sum_{\mathbf{a}<\mathbf{a}'}\phi'_{r}(a_1,\ldots,a_k,\mathbf{b})+\\
&&+\left(\sum_{\hat{\mathbf{a}}< \mathbf{a}'}\left(\sum_{\mathbf{a}\leq \hat{\mathbf{a}}}
\phi(a_1,\ldots,a_{k},\textit{f},\mathbf{b}) +\phi(a_1,\ldots,a_{k},\textit{t},\mathbf{b})\right)^{r-1}\right)\times\left(\sum_{\check{\mathbf{a}}\leq \mathbf{a}'}\phi'_1(\check{a}_1,\ldots,\check{a}_{k},\mathbf{b})\right)-\\
&&-\left(\sum_{\hat{\mathbf{a}} <\mathbf{a}'}\sum_{\mathbf{a}<\hat{\mathbf{a}}}\phi'_{r-1}(a_1,\ldots,a_{k},\mathbf{b})\right)\times\left(\sum_{\check{\mathbf{a}}\leq \mathbf{a}'}\phi'_1(\check{a}_1,\ldots,\check{a}_{k},\mathbf{b})\right)\\
\end{eqnarray*}
}

By collecting the factor $\sum_{\check{\mathbf{a}}\leq \mathbf{a}'}\phi'_1(\check{a}_1,\ldots,\check{a}_{k},\mathbf{b})$
{\small
$\phi'_r(a'_1,\ldots,a'_{k},\mathbf{b})=$
\begin{eqnarray*}
&=&\left(\sum_{\mathbf{a}\leq \mathbf{a}'}
\phi(a_1,\ldots,a_{k},\textit{f},\mathbf{b}) +\phi(a_1,\ldots,a_{k},\textit{t},\mathbf{b})\right)^{r}-\sum_{\mathbf{a}<\mathbf{a}'}\phi'_{r}(a_1,\ldots,a_k,\mathbf{b})+\\
&&+\left(\sum_{\hat{\mathbf{a}}< \mathbf{a}'}\left(\sum_{\mathbf{a}\leq \hat{\mathbf{a}}}
\phi(a_1,\ldots,a_{k},\textit{f},\mathbf{b}) +\phi(a_1,\ldots,a_{k},\textit{t},\mathbf{b})\right)^{r-1}-\sum_{\hat{\mathbf{a}}<\mathbf{a}'}\sum_{\mathbf{a}\leq \hat{\mathbf{a}}}\phi'_{r-1}(a_1,\ldots,a_{k},\mathbf{b})\right)\times\\
&&\times\left(\sum_{\check{\mathbf{a}}\leq \mathbf{a}'}\phi'_1(\check{a}_1,\ldots,\check{a}_{k},\mathbf{b})\right)=\\
&=&\left(\sum_{\mathbf{a}\leq \mathbf{a}'}
\phi(a_1,\ldots,a_{k},\textit{f},\mathbf{b}) +\phi(a_1,\ldots,a_{k},\textit{t},\mathbf{b})\right)^{r}-\sum_{\mathbf{a}<\mathbf{a}'}\phi'_{r}(a_1,\ldots,a_k,\mathbf{b})+\\
&&+\left(\sum_{\hat{\mathbf{a}}< \mathbf{a}'}\left(\sum_{\mathbf{a}\leq \hat{\mathbf{a}}}
\phi(a_1,\ldots,a_{k},\textit{f},\mathbf{b}) +\phi(a_1,\ldots,a_{k},\textit{t},\mathbf{b})\right)^{r-1}-\sum_{\hat{\mathbf{a}}<\mathbf{a}'}\sum_{\mathbf{a}< \hat{\mathbf{a}}}\phi'_{r-1}(a_1,\ldots,a_{k},\mathbf{b})-\right.\\
&&\left.-\sum_{\mathbf{a}<\mathbf{a}'}\phi'_{r-1}(a_1,\ldots,a_{k},\mathbf{b})\right)\times(\sum_{\mathbf{a}\leq \mathbf{a}'}\phi'_1(a_1,\ldots,a_{k},\mathbf{b}))=\\
&=&\left(\sum_{\mathbf{a}\leq \mathbf{a}'}
\phi(a_1,\ldots,a_{k},\textit{f},\mathbf{b}) +\phi(a_1,\ldots,a_{k},\textit{t},\mathbf{b})\right)^{r}-\sum_{\mathbf{a}<\mathbf{a}'}\phi'_{r}(a_1,\ldots,a_k,\mathbf{b})+\\
&&+\left(\sum_{\mathbf{a}< \mathbf{a}'}\phi'_{r-1}(a_1,\ldots,a_{k},\mathbf{b})-\sum_{\mathbf{a}<\mathbf{a}'}\phi'_{r-1}(a_1,\ldots,a_{k},\mathbf{b})
\right)\times \sum_{\mathbf{a}\leq \mathbf{a}'}\phi'_1(a_1,\ldots,a_{k},\mathbf{b})=\\
&=&\left(\sum_{\mathbf{a}\leq \mathbf{a}'}
\phi(a_1,\ldots,a_{k},\textit{f},\mathbf{b}) +\phi(a_1,\ldots,a_{k},\textit{t},\mathbf{b})\right)^{r}-\sum_{\mathbf{a}<\mathbf{a}'}\phi'_{r}(a_1,\ldots,a_k,\mathbf{b})
\end{eqnarray*}
}
\end{proof}

\label{lastpage}
\end{document}